\newcommand{\bl}[1]{\textcolor{blue}{#1}}
\definecolor{mypurple}{rgb}{.4,.0,.5}
\newcommand{\prp}[1]{\textcolor{mypurple}{#1}}
\def\y{{\bf y}}
\def\x{{\bf x}}
\def\x{{\mathbf x}}
\def\x{{\bf x}}
\def\y{{\bf y}}
\def\z{{\bf z}}
\def\h{{\bf h}}
\def\cL{{\mathcal L}}
\def\cA{{\mathcal A}}
\def\cG{{\mathcal G}}
\def\be{\begin{equation}}
\def\ee{\end{equation}}
\def\ba{\left[\begin{array}}
\def\ea{\end{array}\right]}
\def\t{{\bf t}}
\def\x{{\bf x}}
\def\y{{\bf y}}
\def\z{{\bf z}}
\def\1{{\bf 1}}
\def\g{{\bf g}}
\def\0{{\bf 0}}
\def\erf{\mbox{erf}}
\def\erfc{\mbox{erfc}}
\def\mR{{\mathbb R}}
\def\mE{{\mathbb E}}
\def\mP{{\mathbb P}}
\def\lp{\left (}
\def\rp{\right )}
\def\y{{\bf y}}
\def\x{{\bf x}}
\def\x{{\mathbf x}}
\def\x{{\bf x}}
\def\y{{\bf y}}
\def\z{{\bf z}}
\def\h{{\bf h}}
\def\be{\begin{equation}}
\def\ee{\end{equation}}
\def\ba{\left[\begin{array}}
\def\ea{\end{array}\right]}
\def\t{{\bf t}}
\def\x{{\bf x}}
\def\y{{\bf y}}
\def\z{{\bf z}}
\def\({\left (}
\def\){\right )}
\def\1{{\bf 1}}
\def\g{{\bf g}}
\def\0{{\bf 0}}
\def\cY{{\mathcal Y}}
\definecolor{darkgreen}{rgb}{0, 0.4,0}
\definecolor{purplebrown}{rgb}{0.5,0.1,0.6}
\definecolor{ultclupcol}{rgb}{0.1,0.5,0.5}
\definecolor{mytrycolor}{rgb}{0.5,0.7,0.2}
\definecolor{ultclupcola}{rgb}{.5,0,.5}
\definecolor{shadebrown}{rgb}{0.1,0.1,0.9}
\definecolor{lightblue}{rgb}{0.2,0,1}
\newtcbox{\xmybox}{on line,
arc=7pt,
before upper={\rule[-3pt]{0pt}{10pt}},boxrule=0pt,
boxsep=0pt,left=6pt,right=6pt,top=0pt,bottom=0pt,enhanced, coltext=blue, colback=white!10!yellow}
\newtcbox{\xmyboxa}{on line,
arc=7pt,
before upper={\rule[-3pt]{0pt}{10pt}},boxrule=0pt,
boxsep=0pt,left=6pt,right=6pt,top=0pt,bottom=0pt,enhanced, colback=white!10!yellow}
\newtcbox{\xmyboxb}{on line,
arc=7pt,
before upper={\rule[-3pt]{0pt}{10pt}},boxrule=1pt,colframe=darkgreen!100!blue,
boxsep=0pt,left=6pt,right=6pt,top=0pt,bottom=0pt,enhanced, colback=white!10!yellow}
\newtcbox{\xmyboxc}{on line,
arc=7pt,
before upper={\rule[-3pt]{0pt}{10pt}},boxrule=.7pt,colframe=blue!100!blue,
boxsep=0pt,left=6pt,right=6pt,top=0pt,bottom=0pt,enhanced, coltext=blue, colback=white!10!yellow}
\newtcbox{\xmytboxa}{on line,
arc=7pt,
before upper={\rule[-3pt]{0pt}{10pt}},boxrule=.0pt,colframe=pink!50!yellow,
boxsep=0pt,left=6pt,right=6pt,top=0pt,bottom=0pt,enhanced, coltext=white, colback=blue!40!red}
\newtcbox{\xmytboxb}{on line,
arc=7pt,
before upper={\rule[-3pt]{0pt}{10pt}},boxrule=.0pt,colframe=pink!50!yellow,
boxsep=0pt,left=6pt,right=6pt,top=0pt,bottom=0pt,enhanced, coltext=white, colback=white!40!green}
\newcommand\subsubsubsection{\@startsection{paragraph}{4}{\z@}{-2.5ex\@plus -1ex \@minus -.25ex}{1.25ex \@plus .25ex}{\normalfont\normalsize\bfseries}}
\newcommand\subsubsubsubsection{\@startsection{subparagraph}{5}{\z@}{-2.5ex\@plus -1ex \@minus -.25ex}{1.25ex \@plus .25ex}{\normalfont\normalsize\bfseries}}
\newtheorem{theorem}{Theorem}
\newtheorem{lemma}{Lemma}
\begin{document}

\begin{singlespace}

\title{Deep ReLU networks -- injectivity capacity upper bounds  
}
\author{
\textsc{Mihailo Stojnic
\footnote{e-mail: {\tt flatoyer@gmail.com}} }}
\date{}
\maketitle

\centerline{{\bf Abstract}} \vspace*{0.1in}

We study deep ReLU feed forward neural networks (NN) and their injectivity abilities.  The main focus is on \emph{precisely} determining the so-called injectivity capacity. For any given hidden layers architecture, it is defined as the minimal ratio between number of network's outputs and inputs which ensures unique recoverability of the input from a realizable output. A strong recent progress in precisely studying single ReLU layer injectivity properties is here moved to a deep network level. In particular, we develop a program that connects deep $l$-layer  net injectivity to an $l$-extension of the  $\ell_0$ spherical perceptrons, thereby massively generalizing an isomorphism between studying single layer injectivity and the capacity of the so-called (1-extension) $\ell_0$ spherical perceptrons  discussed in \cite{Stojnicinjrelu24}. \emph{Random duality theory} (RDT) based machinery is then created and utilized to statistically handle properties of the extended $\ell_0$ spherical perceptrons and implicitly of the deep ReLU NNs.  A sizeable set of numerical evaluations is conducted as well to put the entire RDT machinery in  practical use. From these we observe a rapidly decreasing tendency in needed layers' expansions, i.e., we observe a rapid \emph{expansion saturation effect}. Only  $4$ layers of depth are sufficient to closely approach level of no needed expansion -- a result that fairly closely resembles observations made in practical experiments and that has so far remained completely untouchable by any of the existing mathematical methodologies.

\vspace*{0.25in} \noindent {\bf Index Terms: Injectivity; Deep ReLU networks; Random duality}.

\end{singlespace}

\section{Introduction}
\label{sec:back}

An avalanche of research in machine learning (ML) and neural networks (NN) over the last decade produced some of the very best scientific breakthroughs. These include developments of both excellent algorithmic methodologies  as well as their accompanying theoretical justifications. For almost all of them a superior level of understanding of underlying mathematical principles is needed. In this paper we study such a principle called injectivity and discuss how its presence/absence impacts/limits functioning of neural nets.

Just by its definition, the functional injectivity plays a critical role in studying inverse problems and is in a direct correspondence  with their well- or ill-posedness. It comes as a no surprise that recent theoretical and practical studying of (nonlinear) inverse problems via neural nets heavily relies on the associated injectivities (see, e.g., \cite{ArridgeMOS19,BoraJPD17,KothariKHD21,HandLV18,DharGE18,LeiJDD19,EstrachSL14,DaskalakisRZ20a,Stojnicinjrelu24}).  Consequently, many aspects od injectivity  gained strong interest in recent years including  studying Lipshitzian/stability  properties,  \cite{FazlyabRHMP19,GoukFPC21,JordanD20}, role of the injective ReLU nets in manifold densities and approximative maps \cite{PuthawalaKLDH22,PuthawalaLDH22,RossC21},  algorithmic approaches to  generative models \cite{FletcherRS18,LeiJDD19,RomanoEM17,HeWJ17,MardaniSDPMVP18,SchniterRF16,RanganSF17,ShahH18,HegdeWB07,HandLV18,WuRL19},  deep learning compressed sensing/phase retrieval \cite{BoraJPD17,KothariKHD21,LeiJDD19,DaskalakisRZ20a,DharGE18,HandLV18,MardaniSDPMVP18,HHHV18},  and random matrix - neural networks connections \cite{MBBDN23,PenningtonW17,LouartLC17}.

As discussed in many of the above works, characterizing analytically  injectivity of a whole network is not an easy mathematical problem. It typically relates to the so-called injectivity capacity defined as the minimal ratio of the network's number of outputs and inputs for which a \emph{unique} generative input produces a realizable output. Despite a host of technical difficulties, in addition to excellent practical implementations, strong accompanying analytical results are obtained in many of the above works as well. They usually relate to the so-called \emph{qualitative} performance characterizations which typically  give correct dimensional orders and, as such, provide an intuitive guidance for building networks' architectures. Since our interest is in more precise, i.e.,  \emph{quantitative} performance characterizations, results from \cite{Pal21,Clum22,MBBDN23,PuthawalaKLDH22,Stojnicinjrelu24} are more closely related to ours and we discuss them throughout the presentation after the introduction of necessary technical preliminaries.

\section{Mathematical preliminaries, related work, and contributions}
 \label{sec:mathsetup}

For any given positive integer $l$, consider  sequences of positive integers  $m_0,m_1,m_2,m_3,\dots,m_l$, matrices $A^{(i)}\in \mR^{m_i\times m_{i-1}}$, and real maps $f_{g_i}(\cdot): \mR^{m_i}\rightarrow\mR^{m_i}$. The following \emph{nonlinear} system of equations will be the main mathematical object of our study
\begin{eqnarray}
\bar{\y}^{(l)}=  f_{g_l} \(A^{(l)} \dots  \(A^{(4)} f_{g_3} \(A^{(3)} f_{g_2}\( A^{(2)} f_{g_1}(A^{(1)}\bar{\x}) \)\)\)\). \label{eq:ex1}
\end{eqnarray}
Adopting the convention $n\triangleq m_0$, one has that $A^{(i)}$'s  are linearly transformational system matrices, $f_{g_i}(\cdot)$ are (nonlinear) system functions, and $\bar{\x}\in\mR^n$ is a generative input vector to be recovered. A generic nature of our presentation will ensure that both developed methodologies and obtained results  can be utilized in conjunction with a host of different activation functions $f_{g_i}(\cdot)$. To ensure neatness of the exposition, we consider the so-called componentwise activation functions that act  in the same manner on each coordinate of their vector argument. Given their importance in studying and practical utilization of deep neural networks (DNN), we consider  ReLUs as concrete activation examples
\begin{eqnarray}
 f_{g_i}(\x)=\max(\x,0), \label{eq:ex1a0}
\end{eqnarray}
 with $\max$ being applied componentwise. After setting
\begin{eqnarray}
{\mathcal A}_{1:l} \triangleq [A^{(1)},A^{(2)},\dots,A^{(l)}], \label{eq:ex1a0a0}
\end{eqnarray}
it is then not that difficult to see that the nonlinear system from (\ref{eq:ex1}) becomes the so-called
\begin{equation}
\hspace{-0in}\bl{\textbf{\emph{Deep ReLU system ${\mathcal N}_{1:l}$:}}} \hspace{.2in} \bar{\y}^{(l)}=\max\(A^{(l)}\dots \(A^{(2)}\max\(A^{(1)}\bar{\x},0 \) ,0 \)  ,0 \) \triangleq \bar{f}_{nn}(\bar{\x};{\mathcal A}_{1:l}). \label{eq:ex1a2}
\end{equation}
We particularly focus on mathematically typically the most challenging  \emph{linear} (or, as often called, \emph{proportional}) high-dimensional regimes with $i$-th layer \emph{absolute} expansion coefficients
 \begin{eqnarray}
\alpha_i  \triangleq    \lim_{m_0\rightarrow \infty} \frac{m_i}{m_{0}}, i=1,2,\dots,l. \label{eq:ex15}
\end{eqnarray}
These are closely connected to  \emph{relative} expansion coefficients
 \begin{eqnarray}
\zeta_i  \triangleq    \lim_{m_0\rightarrow \infty} \frac{m_i}{m_{i-1}} = \frac{\alpha_i}{\alpha_{i-1}}, i=1,2,\dots,l. \label{eq:ex15a0}
\end{eqnarray}
Clearly, the absolute expansion coefficients relate to the expansion of the whole network, whereas the relative ones relate to the expansions within each of the layers. The system in (\ref{eq:ex1a2}) is a mathematical description of a feed forward ReLU DNN with input $\bar{\x}$, output $\bar{\y}$, and the weights of the gates in the $i$-th hidden layer being the rows of $A^{(i)}$.

Given the growing popularity of machine learning (ML) and neural networks (NN) concepts, the interest in ReLU DNNs picked up over the last decade as well. Various properties of ReLU gates have been the subject of extensive research including both single-layer structures  (see, e.g., \cite{PuthawalaKLDH22,FuruyaPLH23,MBBDN23,PuthawalaLDH22,Pal21,Clum22})
as well as more complex multilayered ones (see, e.g., \cite{BalMalZech19,ZavPeh21,AMZ24,Stojnictcmspnncapdiffactrdt23,Stojnictcmspnncapdinfdiffactrdt23}). Of particular interest have been the invertibility or injectivity abilities of ReLU activations as they make them a bit different from, say, more traditional \emph{sign} perceptrons or other nonlinear ones. For example, just mere existence of at least $m_1\geq m_0$ nonzero elements at the output of the first layer is sufficient to recover the generating input (a non-degenerative scenario with any subset of rows/columns  of $A^{(i)}$ being of full rank is assumed throughout the paper; in statistical contexts of our interest here, this typically happens with probability 1). While potential existence of the injectivity is relatively easy to observe, it is highly nontrivial to determine the minimal length of $\bar{\y}$ that ensures its existence (see, e.g., \cite{PuthawalaKLDH22,FuruyaPLH23,MBBDN23,PuthawalaLDH22,Pal21,Clum22,Stojnicinjrelu24}). In fact, this is already very complicated to do evem for a single layer network \cite{MBBDN23,Stojnicinjrelu24}.

\emph{Precisely} determining sequence  $m_0,m_1,m_2,\dots,m_l$ for which ReLU DNNs are injective is the main topic of this paper. As we will be working in the high-dimensional linear (proportional) regime, this  effectively translates into determining the corresponding \emph{absolute} expansion sequence, $\alpha_i,i=1,2,\dots,l$. It is not that difficult to see that in order to have an $l$-layer ReLU NN, ${\mathcal N}_{1:l}$, injective, all its subnetworks ${\mathcal N}_{1:1},{\mathcal N}_{1:2},\dots,{\mathcal N}_{1:(1-1)}$, must be injective as well. A sequence $\alpha_i,i=1,2,\dots,(l-1)$ that ensures this will be called \emph{injectively admissible}. Of particular interest are the  minimal ones, i.e., the ones that require minimal necessary expansion in each of the layers. They will be called \emph{minimally injectively admissible}. Following the usual terminology (see, e.g., \cite{PuthawalaKLDH22,FuruyaPLH23,MBBDN23,PuthawalaLDH22}), we, for an injectively admissible sequence $\alpha_i,i=1,2,\dots,(l-1)$ (for $l=1$ the sequence is empty and automatically admissible), and  statistical $A^{(i)},i=1,2,\dots,l$, formally define
 \begin{eqnarray}
\bl{\emph{\textbf{$l$-layer ReLU NN, ${\mathcal N}_{1:l}$, injectivity capacity:}}} & & \nonumber   \\
\hspace{-0in} \alpha_{ReLU}^{(inj)}\(\alpha_1,\alpha_2,\dots,\alpha_{l-1}\)  \triangleq  \min \hspace{1in} & &  \hspace{-1in} \alpha_l\nonumber \\
  \mbox{subject to} \hspace{1in} & &  \hspace{-1in} \lim_{n\rightarrow\infty}\mP_{{\mathcal A}_{1:l}} (\forall\bar{\x},\nexists \x\neq \bar{\x} \quad \mbox{such that}\quad \bar{f}_{nn}(\bar{\x};{\mathcal A}_{1:l}) = \bar{f}_{nn}(\x;{\mathcal A}_{1:l}) )=1. \nonumber \\\label{eq:ex1a3}
\end{eqnarray}
To facilitate the exposition we assume throughout the paper  that $A^{(i)}$'s are comprised of iid standard normals (as mentioned in \cite{Stojnicinjrelu24}, all of our results are easily generalizable to various other statistics that can be pushed through the Lindeberg variant of the central limit theorem). We also adopt the convention that the subscripts next to $\mP$ and $\mE$ denote the underlying source of randomness (these subscripts are left unspecified when the source of randomness is clear from the context). It is also not that difficult to establish the definition of
 \begin{eqnarray}
\bl{\emph{\textbf{\underline{Minimally} injectively admissible sequence:}}} \quad\quad  \alpha_i = \alpha_{ReLU}^{(inj)}\(\alpha_1,\alpha_2,\dots,\alpha_{i-1}\),i=1,2,\dots,l.\label{eq:ex1a3a0}
\end{eqnarray}
All our results will be obtained for generally injectively admissible sequences. To obtain concrete numerical values of the capacities we will then consider the minimally injectively admissible ones as they provide architecture with the minimal expansion in each of the layers.

\subsection{Related prior work}
\label{sec:relwork}

\noindent $\star$ \underline{\emph{Deep learning compressed sensing}:} A particularly prominent role of injectivity over the last several years appeared  in deep learning approaches to compressed sensing (or general structured objects recovery). In \cite{BoraJPD17} a deep learning compressed sensing paradigm was put forth where properly trained deep ReLU nets are used as sparse signals generative engines. Nice ReLU analytical properties allowed for utilization of gradient methods to recover both generating input and the corresponding network output (i.e., the desired sparse signal). Experimental results showed a superior performance compared to standard convexity based techniques with a level of compression decreased by 4-5 times compared to say LASSO. Analytical results showed dimensional orders and errors that match the best known ones of the convex methods provided that gradient solves the underlying optimization. \cite{HHHV18a} then showed a polynomial running of the gradient provided a logarithmic layers expansions. On the other hand, \cite{DaskalakisRZ20a}) showed that constant expansion in principle suffices for compressed sensing  while the results of  \cite{LeiJDD19} (when taken together with  \cite{ShahH18}) achieved the same constant type of expansion with a particularly tailored layer-wise Gaussian elimination algorithm. Many other utilizations of generative models, or generative adversarial networks (GAN), appeared in further improving  compressed sensing, phase retrieval or denoising   \cite{DharGE18,HandLV18,MardaniSDPMVP18,HHHV18}. As noted in most of the above works, invertibility/injectivity of the generating models/networks is a key precondition that allows for application of such objects in deep learning compressed sensing. Moreover, its precise dimensional characterizations critically impact the architecture of the networks, computational complexity of the training and recovery algorithms, and the accuracy of the theoretical guarantees that rely on them. Naturally, studying the above defined injectivity capacities in full detail imminently gained traction and became extraordinarily relevant research topic on its own as well.

\noindent $\star$ \underline{\emph{Injectivity capacity versus classical perceptrons capacities}:}  Before switching to precise studies of capacities, one thing needs to be carefully addressed as well. Namely, although conceptually somewhat connected, the  capacities defined in the previous section are actually  different from the ones typically associated with classical spherical
\cite{Schlafli,Cover65,Winder,Winder61,Wendel,Cameron60,Joseph60,Gar88,Ven86,BalVen87,SchTir02,SchTir03,StojnicGardGen13,StojnicGardSphErr13,Talbook11a,Stojnicnegsphflrdt23}
or binary perceptrons \cite{FPSUZ17,Gar88,GarDer88,GutSte90,KraMez89,AbbLiSly21b,PerkXu21,AbbLiSly21a,AlwLiuSaw21,AubPerZde19,GamKizPerXu22}. There are two  key difference: \textbf{\emph{(i)}} The injectivity capacities are defined for the whole network whereas the classical ones are typically defined for a single gate; and \textbf{\emph{(ii)}} The classical perceptrons' capacities usually relate to the network (gate) ability to store/memorize patterns -- fundamentally different from ability to (uniquely) invert them. Some aspects of these differences can be bridged though. For example, regarding the first difference, \cite{EKTVZ92,BHS92,MitchDurb89,Stojnictcmspnncapliftedrdt23,Stojnictcmspnncaprdt23} show how single layer associative memories extend to multilayered ones as well.

\noindent $\star$ \underline{\emph{Connecting injectivity  and classical perceptrons capacities}:} While the second of the above differences by the definition can not be conceptually bridged, it can be bridged in mathematical terms. Namely, as shown in \cite{Stojnicinjrelu24},  a so-called  $\ell_0$ \emph{spherical perceptron} reformulation of the single layer injectivity capacities is possible so that they resemble (or become mathematically equivalent to) variants of the classical ones. Further connections with classical perceptron studies have been established while focusing on the injectivity of 1-layer ReLU NNs. In particular, \cite{PuthawalaKLDH22} combined union bounding with spherical perceptron capacity characterizations \cite{Cover65,Wendel,Winder,Winder61} to obtain $\approx 3.3$ and $\approx 10.5$  as respective lower and upper bounds on the single layer injectivity capacity under the Gaussianity of $A$ (for a further upper-bound decrease  to $\approx 9.091$ see \cite{MBBDN23} and reference therein  \cite{Pal21,Clum22}; on the other hand, \cite{PuthawalaKLDH22} showed that allowing for optimal network's weights choice lowers the capacity to $2$). Appearance of a discrepancy between the bounds strongly suggested their looseness. While opting for union bounding comes as a no surprise given the highly nonconvex nature of the underlying problems, the resulting deviation from exactness is to be expected as well. As discussed in a long line of work on various perceptrons models \cite{Tal06,Talbook11a,StojnicGardSphErr13,StojnicDiscPercp13}, utilization of classical techniques when facing nonconvex problems typically results in built-in suboptimality.

\noindent $\star$ \underline{\emph{Geometric approaches}:} Different approaches were taken in \cite{Pal21,Clum22} where a connection to high-dimensional integral geometry studies of intersecting random subspaces is established. Relying  on the kinematic formula \cite{SW08} (and somewhat resembling earlier compressed sensing approaches of  \cite{ALMT14,DonohoPol}), \cite{Pal21} considered a heuristical Euler characteristics approximation approach and obtained a slightly  better capacity estimate $\approx 8.34$.

\noindent $\star$ \underline{\emph{Probabilistic approaches}:}  Exactness of this prediction was considered in \cite{MBBDN23}. Following into the  footsteps of \cite{Pal21,Clum22} and connecting further ReLU injectivity and intersecting random subspaces, \cite{MBBDN23} showed that application of Gordon's escape through a mesh theorem \cite{Gordon85,Gordon88}  results in a much large  $\approx 23.54$ capacity bound. The authors of \cite{MBBDN23} then made use of the (plain) \emph{random duality theory} (RDT) that Stojnic created in a long line of work \cite{StojnicCSetam09,StojnicICASSP10var,StojnicGorEx10,StojnicRegRndDlt10,StojnicGardGen13} and lowered the capacity upper-bound to $\approx 7.65$.  This was both significantly better than what they got through the Gordon's theorem and also sufficiently good to refute the above mentioned  Euler characteristics approximation based prediction. \cite{MBBDN23} continued further by utilizing replica methods from statistical physics and obtained single layer injectivity capacity prediction $\approx 6.698$. All these results were closely matched by Stojnic in \cite{Stojnicinjrelu24} through the utilization of the \emph{Fully lifted random duality theory} (fl RDT).

\noindent $\star$  \underline{\emph{``Qualitative'' vs ``quantitative'' performance characterizations:}}  The above results mostly relate to single layer ReLU injectivity. Due to underlying difficulties, studying, of a presumably simpler, single layer injectivity equivalent is often undertaken as a first step on a path towards handling multilayered networks (see, e.g. \cite{Pal21,Clum22,MBBDN23,PuthawalaKLDH22,Stojnicinjrelu24}). One however needs to note a significant difference between the role studying of a single layer injectivity plays in both qualitative  and quantitative performance analyses. In qualitative analyses (that mostly focus on correct dimensional orders) results obtained for a single layer almost automatically extend to multilayered structures as the dimension orders are to a large degree preserved. In quantitative analyses things are way more different. Even when one can precisely characterize performance of one layer, trivial extensions to higher layers typically incur a substantial suboptimality (while the  dimensional orders are likely to be preserved the concrete associated constants dramatically change). For example, if a single layer injectivity capacity is $\alpha\approx 6.7$ then the corresponding $l$-layer one is for sure trivially upper-bounded by $\alpha^l\approx 6.7^l$. From the qualitative performance characterization point of view, this is sufficient to ensure constant  expansion per both layer and network as a whole which on its own is a remarkable property. On the other hand, from the quantitative performance analysis point of view, such a bound is expected to be highly suboptimal. Moreover, it grows at a much faster pace than what is observed in  practical implementations (see, e.g., \cite{BoraJPD17} where the needed expansion is $\sim 40$ which is way smaller than $6.7^3\approx 300$). As such it can be way too conservative and result in projecting much larger network architectures than needed, potentially causing computational intractabilities of both training and recovery algorithms. In other words, if significantly suboptimal, it could be more suited as a cautious intuitive guidance  rather than as a precise recipe for designing network architectures. In general, both types of analysis are useful, one just needs to be careful when and how to use them. The qualitative ones are more adapted for scenarios where one needs quick intuitive assessments whereas the quantitative ones are more tailored for fine-grained precise designs.

In what follows, we consider  multilayered deep ReLU NNs and focus on a \emph{quantitative} type of analysis that provides injectivity capacity upper-bounds much lower than the ones obtained through trivial single layer generalizations. Before proceeding  further with the technical analysis, we below summarize some of our key results.

\subsection{Our contributions}
\label{sec:contrib}

As stated above our focus is on precise studying of deep ReLU NN injectivity capacities.  A fully connected feed forward network architecture is considered in a statistical so-called \emph{linear/proportional} high-dimensional regime. This effectively means that the number of outputs of $l$-th layer, $m_l$,  is $\alpha_l$ times larger than the number of network inputs, $n=m_0$. Moreover, it remains constant as the number of inputs/outputs grows.

\begin{itemize}
  \item Following  \cite{Stojnicinjrelu24,StojnicGardGen13,StojnicGardSphErr13,Stojnicnegsphflrdt23}, we establish  a connection between studying $l$-layer ReLU NN injectivity properties and random feasibility problem (rfps). In particular (see Section \ref{sec:mathsetup}), we find
      \begin{eqnarray}      \label{eq:eqint1}
      \mbox{$l$-layer  ReLU NN injectivity} \qquad \Longleftrightarrow \qquad \mbox{$l$-extended $\ell_0$  spherical perceptron},
      \end{eqnarray}
      which then implies
      \begin{eqnarray}      \label{eq:eqint2}
      \mbox{$l$-layer  ReLU NN injectivity capacity} \qquad \Longleftrightarrow \qquad \mbox{$l$-extended $\ell_0$  spherical perceptron capacity}.
      \end{eqnarray}
   \item Relying on \emph{Random duality theory}  (RDT) we create a generic program for studying deep ReLU NNs and their injectivity properties. We introduce two notions of injectivity, \emph{weak} and \emph{strong} (see Section \ref{sec:2lay}). The weak one relates to invertibility over all possible inputs, whereas the strong one relates to invertibility of any given input. For both notions and for any number of layers $l$, we provide upper-bounds on $l$-layer ReLU NN injectivity capacity  (see Sections \ref{sec:ubrdt} and \ref{sec:3ubrdt}).
 \item   For the first three layers explicit numerical values of the capacity bounds and all associated RDT parameters are provided (see Section \ref{sec:ubrdt}  and Tables \ref{tab:tab1} and  \ref{tab:tab2} for 2-layer nets and Section \ref{sec:3ubrdt}  and Tables \ref{tab:tab3}, and \ref{tab:tab4} for 3-layer nets).
\item Table \ref{tab:tab0} previews the change in \emph{weak} injectivity capacity as the number of layers increases. A remarkable rapid onset of \emph{expansion saturation effect} is observed. Namely, already for nets with 4 layers necessary expansion per layer decreases fairly close to 1, thereby almost reaching level of no needed expansion.
\begin{table}[h]
\caption{ Deep ReLU NN \emph{weak} injectivity capacity and layers expansions ; $m_i$ -- \# of nodes in the $i$-th layer}\vspace{.1in}
\centering
\def\arraystretch{1.2}
\begin{tabular}{||c||c||c||c||c||}\hline\hline
 $l$ (\# of layers)                                     &  $\mathbf{1}$ &   $\mathbf{2}$
                                                                    &  $\mathbf{3}$       &  $\mathbf{4}$  \\ \hline\hline
    $\begin{matrix}
     \mbox{\textbf{Injectivity capacity} (upper bound)} \\
     \lp \alpha_l=\lim_{n\rightarrow\infty}\frac{m_l}{n} =\lim_{m_0\rightarrow\infty}\frac{m_l}{m_0} \rp  \end{matrix}_{\big.}^{\big.}$
                     &  $\bl{\mathbf{6.7004}}$ &   $\bl{\mathbf{8.267}}$
                                                                    &  $\bl{\mathbf{9.49}}$       &  $\bl{\mathbf{10.124}}$  \\ \hline\hline
    $\begin{matrix}
     \mbox{\textbf{Layer's expansion}} \\
  \lp \zeta_l=\lim_{n\rightarrow\infty}\frac{m_l}{m_{l-1}}=\frac{\alpha_l}{\alpha_{l-1}} \rp
  \end{matrix}_{\big.}^{\big.}$
        &  $\prp{\mathbf{6.7004}}$ &   $\prp{\mathbf{1.2338}}$
                                                                    &  $\prp{\mathbf{1.1479}}$       &  $\prp{\mathbf{1.0668}}$
      \\ \hline\hline
  \end{tabular}
\label{tab:tab0}
\end{table}
  \item To further lower the capacity upper bounds, we develop a powerful \emph{Lifted} RDT based program (see Sections \ref{sec:ubplrdt}  and \ref{sec:plllay}). For 2-layer nets we provide concrete numerical values and observe that they indeed lower the corresponding plain RDT  ones  (see Section \ref{sec:ubplrdt} and Tables \ref{tab:tab7}and \ref{tab:tab8}).
\item Implications for compressed sensing seem rather dramatic. As the reciprocal of the values given in Table \ref{tab:tab0} are tightly connected to sparsity and undersampling  ratios in compressed sensing, provided that nets are sufficiently generalizable and that the underlying recovery algorithms run fast (which practical implementations suggest is the case), the above results allow for unprecedented improvement unreachable by any of the currently practically usable non NN based compressed sensing algorithms.
\end{itemize}

\section{2-layer ReLU NN}
 \label{sec:2lay}

To make the presentation smoother we start more technical discussions by considering 2-layered nets as the simplest example of the multilayered ones. For such networks injectively admissible sequence has only one component, $\alpha_1$, which, by the earlier definition, is not smaller than the injectivity capacity of a single ReLU layer. Keeping this in mind and proceeding in a similar fashion as in \cite{Stojnicinjrelu24}, we rely on \cite{StojnicGardGen13,StojnicGardSphErr13,StojnicDiscPercp13,Stojnicbinperflrdt23,Stojnicnegsphflrdt23} and the random feasibility problems (rfps) considerations established therein to observe that for $l=2$ the condition under the probability in (\ref{eq:ex1a3a0}) is directly related to the following \emph{feasibility} optimization problem
 \begin{eqnarray}
 {\mathcal F}(\cA_{1:2},\alpha_1,\alpha_2): \qquad \qquad    \mbox{find} & & \x\nonumber \\
  \mbox{subject to} & &  A^{(1)}\x=\z\nonumber \\
  & &  A^{(2)}\max(\z,0)=\t\nonumber \\
  & & \|\max(\t,0)\|_0 < 2n. \label{eq:ex1a4}
\end{eqnarray}
To see the connection, we first note that if the above problem is infeasible then $\bar{\y}^{(2)}$ in (\ref{eq:ex1a2}) must have at least $2n$ nonzero components.  Under the assumption that $\alpha_1$ is injectively admissible (and under the non-degenerative assumption stated a bit earlier) the $\bar{\y}^{(1)}$'s number of the nonzeros  is at least $n=m_0$. This then implies that $\bar{\y}^{(2)}$ has at least $n$ nonzeros.

Assume that there are two outputs of the first layer, say $\bar{\y}^{(1,1)}$ and $\bar{\y}^{(1,2)}$ that both generate $\bar{\y}^{(2)}$. Let the sets of indices of $\bar{\y}^{(1,1)}$ and $\bar{\y}^{(1,2)}$ nonzero components be $S_1$ and $S_2$, respectively. Similarly, let the set of  indices of $\bar{\y}^{(2)}$'s nonzero components  be $S_0$. Set
 \begin{eqnarray}
 S_o  & = & S_1\cap S_2 \nonumber \\
 S_{d_1}  & = & S_1\setminus S_2 \nonumber\\
 S_{d_2}  & = & S_2\setminus S_1. \label{eq:ex1a4a0a0}
\end{eqnarray}
For a lack of injectivity,  there must exist an $\x\in \mR^n$ such that
 \begin{eqnarray}
 \bar{\y}_{S_1}^{(1,1)} & = & A_{S_1,:}^{(1)}\bar{\x} \nonumber \\
 \bar{\y}_{S_2}^{(1,2)} & = & A_{S_2,:}^{(1)} \x \nonumber \\
A_{S_0,S_1}^{(2)}A_{S_1,:}^{(1)}\bar{\x}=
A_{S_0,S_1}^{(2)}\bar{\y}_{S_1}^{(1,1)}
& = &
 A_{S_0,S_2}^{(2)}\bar{\y}_{S_2}^{(1,2)}  = A_{S_0,S_2}^{(2)}A_{S_1,:}^{(1)} \x. \label{eq:ex1a4a0}
\end{eqnarray}
The last equality in (\ref{eq:ex1a4a0}) further gives that the following condition must also be met to ensure the lack of injectivity
 \begin{eqnarray}
A_{S_0,S_1}^{(2)}A_{S_1,:}^{(1)}\bar{\x} - A_{S_0,S_2}^{(2)}A_{S_1,:}^{(1)}\x  & = &  0
\nonumber \\
\quad \Longleftrightarrow \quad\hspace{.2in}
\begin{bmatrix}
  A_{S_0,S_o}^{(2)}A_{S_o,:}^{(1)} + A_{S_0,S_{d_1}}^{(2)}A_{S_{d_1},:}^{(1)}    &
- A_{S_0,S_o}^{(2)}A_{S_o,:}^{(1)}  - A_{S_0,S_{d_2}}^{(2)}A_{S_{d_2},:}^{(1)}
\end{bmatrix}   \begin{bmatrix}
                   \bar{\x}   \\    \x
                \end{bmatrix}  &   =  & 0. \label{eq:ex1a4a0a1}
\end{eqnarray}
Under the non-degenerative assumption stated a bit earlier and keeping in mind that the cardinalities of $S_0$, $S_1$, and $S_2$ satisfy
 \begin{eqnarray}
2n \leq |S_0| \leq  \min(| S_1 |,|S_2|), \label{eq:ex1a4a0a2}
\end{eqnarray}
one has that
 \begin{eqnarray}
 \mbox{rank} \lp \begin{bmatrix}
  A_{S_0,S_o}^{(2)}A_{S_o,:}^{(1)} + A_{S_0,S_{d_1}}^{(2)}A_{S_{d_1},:}^{(1)}    &
- A_{S_0,S_o}^{(2)}A_{S_o,:}^{(1)}  - A_{S_0,S_{d_2}}^{(2)}A_{S_{d_2},:}^{(1)}
\end{bmatrix}  \rp \geq 2n, \label{eq:ex1a4a0a3}
\end{eqnarray}
which contradicts existence of $\begin{bmatrix}
                   \bar{\x}   \\    \x
                \end{bmatrix} \in\mR^{2n}$
that satisfies (\ref{eq:ex1a4a0a1}) and therefore the presumed nonexistence of injectivity. The likelihood of having non-degenerative assumption in place over continuous $A^{(1)}$ and $A^{(2)}$ that we will consider later on (namely the standard normal ones)   is trivially zero (in other words, it is a possible, but improbable event). We then say that infeasibility of (\ref{eq:ex1a4}) implies a \emph{typical} strong injectivity or  for brevity  just ``\emph{strong injectivity}''. From a practical point of view, a weaker notion might be of even greater interest. Such a notion assumes that for a given generative $\bar{\x}$ there is no $\x$ such that (\ref{eq:ex1a4a0}) holds. The non-degenerative assumption together with the above reasoning then easily gives that the infeasibility of (\ref{eq:ex1a4}) with a factor 2 removed implies the ``\emph{weak injectivity'}'.

To make writing easier we set
 \begin{eqnarray}
f_{(s)}  & \triangleq & \|\max(\t,0)\|_0 -2n  \nonumber \\
f_{(w)} & \triangleq&  \|\max(\t,0)\|_0 - n  \nonumber \\
f_{inj} & \triangleq &  \begin{cases}
                         f_{(s)} , & \mbox{for \emph{strong injectivity}} \\
                       f_{(w)}  , & \mbox{for \emph{weak injectivity}}.
                     \end{cases} \label{eq:ex1a4a0a4}
\end{eqnarray}
When needed for the concreteness  in the derivations below, we specialize to the weak case and take $f_{inj}=f_{(w)}$. Very minimal modifications of our final results will then automatically be applicable to the strong case as well. One should also note that there is no distinction between the above two notions of injectivity in 1-layer networks.

Returning back to (\ref{eq:ex1a4}) and noting the scaling invariance of the optimization therein, one can, without loss of generality, assume the unit sphere restriction of $\x$. Moreover, following further the trends set in \cite{StojnicGardGen13,StojnicGardSphErr13,StojnicDiscPercp13,Stojnicbinperflrdt23,Stojnicnegsphflrdt23}, one can also introduce  $f_a(\x,\z,\t):\mR^{m_0+m_1+m_2}\rightarrow\mR$ as an artificial objective and transform the (\emph{random}) feasibility problem  (rfp) given in  (\ref{eq:ex1a4}) into the following (\emph{random}) optimization problem (rop)
 \begin{eqnarray}
\hspace{-.8in}\bl{\textbf{\emph{2-extended $\ell_0$ spherical perceptron}}} \qquad   \qquad   \qquad    \min_{\x,\z,\t} &   & f_a(\x,\z,\t)\nonumber \\
  \mbox{subject to} & &  A^{(1)}\x=\z\nonumber \\
    & &  A^{(2)}\max(\z,0)=\t\nonumber \\
  & & f_{inj} <0  \nonumber \\
  & & \|\x\|_2=1. \label{eq:ex1a5}
\end{eqnarray}
Any optimization problem is actually solvable only if it is feasible. Under the feasibility assumption, (\ref{eq:ex1a5}) can then be rewritten as
\begin{equation}
\xi_{f}^{(2)}(f_a) = \hspace{-.02in} \min_{  \|\x\|_2=1,  f_{inj} < 0  } \max_{\y_{(i)}\in\cY_i}  \lp f_a(\x,\z,\t) +\y_{(1)}^T A^{(1)}\x +\y_{(2)}^T A^{(2)}\max(\z,0) -\y_{(1)}^T  \z -\y_{(2)}^T  \t  \rp,
 \label{eq:ex2}
\end{equation}
where $\cY_i=\mR^{m_1}$. Since $f_a(\x,\z,\t)$ is an artificial object, one can specialize back to $f_a(\x,\z,\t)=0$ and write
\begin{eqnarray}
\xi_{f}^{(2)}(0) =  \min_{\|\x\|_2=1,  f_{inj} < 0 } \max_{\y_{(i)}\in\cY_i}  \lp \y_{(1)}^T A^{(1)}\x +\y_{(2)}^T A^{(2)}\max(\z,0) -\y_{(1)}^T  \z -\y_{(2)}^T  \t  \rp.
 \label{eq:ex2a1}
\end{eqnarray}
From (\ref{eq:ex2a1}), one can now easily see the main point behind the connection to rfps. Namely, the existence of a triplet $\x,\z,\t$ such that $\|\x\|_2=1$, $f_{inj} < 0$  and  $A^{(1)}\x=\z$, $A^{(2)}\max(\z,0)=\t$ i.e., such that (\ref{eq:ex1a4}) is feasible, ensures that (\ref{eq:ex2a1})'s inner maximization  can do no better than make $\xi_{f}^{(2)}(0) =0$. If such a triplet does not exist, then at least one of the equalities in $A^{(1)}\x= \z$ or $A^{(2)}\max(\z,0)= \t$ is violated which allows the inner maximization to trivially achieve $\xi_{f}^{(2)}(0) =\infty$. Since  $\xi_{f}^{(2)}(0) =\infty$ and $\xi_{f}^{(2)}(0) >0$ are no different from the feasibility point of view, the underlying optimization in (\ref{eq:ex2a1}) can be viewed as $\y_{(1))},\y_{(2))}$ scaling  invariant for all practical purposes. Such an invariance allows restriction to $\|\y_{(1)}\|_2=1$ and $\|\y_{(2)}\|_2=\frac{1}{\sqrt{n}}$  which in return guarantees boundedness of $\xi_{f}^{(2)}(0)$. Having all of this in mind, one recognizes the importance of
\begin{eqnarray}
\xi_{ReLU}^{(2)} =  \min_{\|\x\|_2=1,f_{inj} < 0} \max_{\|\y_{(1)}\|_2=1,|\y_{(2)}\|_2=\frac{1}{\sqrt{n}}}  \lp \y_{(1)}^T A^{(1)}\x +\y_{(2)}^T A^{(2)}\max(\z,0) -\y_{(1)}^T  \z -\y_{(2)}^T  \t  \rp,
 \label{eq:ex3}
\end{eqnarray}
for the analytical characterization of (\ref{eq:ex1a4}). In fact, it is the sign of $\xi_{ReLU}^{(2)}$  (i.e., of the objective in (\ref{eq:ex3}))  that determines (\ref{eq:ex1a4})'s  feasibility. If $\xi_{ReLU}^{(2)}>0$ then (\ref{eq:ex1a4}) is infeasible and the network is (typically) injective. A combination with (\ref{eq:ex1a4}) allows then for the following \emph{typical} rewriting of (\ref{eq:ex1a3})
 \begin{eqnarray}
\hspace{-0in} \alpha_{ReLU}^{(inj)}(\alpha_1) & \triangleq  &
 \max \{\alpha_2 |\hspace{.08in}
 \lim_{n\rightarrow\infty}\mP_{\cA_{1:2}} (\nexists \x\neq \bar{\x} \quad \mbox{such that}\quad \bar{f}_{nn}(\bar{\x},\cA_{1:2})=\bar{f}_{nn}(\x,\cA_{1:2}))=1\}
  \nonumber \\
  & = & \max \{\alpha_2 |\hspace{.08in}  \lim_{n\rightarrow\infty}\mP_{\cA_{1:2}}   \lp{\mathcal F}(\cA_{1:2},\alpha_1,\alpha_2) \hspace{.07in}\mbox{is feasible} \rp\longrightarrow 1\}
  \nonumber \\
  & = & \max \{\alpha_2 |\hspace{.08in}  \lim_{n\rightarrow\infty}\mP_{\cA_{1:2}}  \lp\xi_{ReLU}^{(2)}(0)>0\rp\longrightarrow 1\}.
  \label{eq:ex4}
\end{eqnarray}
To handle (\ref{eq:ex3}) (and ultimately  (\ref{eq:ex4})) we rely on Random duality theory (RDT) developed in a long series of work \cite{StojnicCSetam09,StojnicICASSP10var,StojnicCSetamBlock09,StojnicICASSP10block,StojnicRegRndDlt10}. This is shown next.

\subsection{Upper-bounding $\alpha_{ReLU}^{(inj)}(\alpha_1)$ via Random Duality Theory (RDT)}
\label{sec:ubrdt}

We  start by providing a brief summary of the main RDT principles and then continue by showing, step-by-step, how each of them relates to the problems of our interest here.

\vspace{-.0in}\begin{center}
 	\tcbset{beamer,lower separated=false, fonttitle=\bfseries, coltext=black ,
		interior style={top color=yellow!20!white, bottom color=yellow!60!white},title style={left color=black!80!purple!60!cyan, right color=yellow!80!white},
		width=(\linewidth-4pt)/4,before=,after=\hfill,fonttitle=\bfseries}
 \begin{tcolorbox}[beamer,title={\small Summary of the RDT's main principles} \cite{StojnicCSetam09,StojnicRegRndDlt10}, width=1\linewidth]
\vspace{-.15in}
{\small \begin{eqnarray*}
 \begin{array}{ll}
\hspace{-.19in} \mbox{1) \emph{Finding underlying optimization algebraic representation}}
 & \hspace{-.0in} \mbox{2) \emph{Determining the random dual}} \\
\hspace{-.19in} \mbox{3) \emph{Handling the random dual}} &
 \hspace{-.0in} \mbox{4) \emph{Double-checking strong random duality.}}
 \end{array}
  \end{eqnarray*}}
\vspace{-.2in}
 \end{tcolorbox}
\end{center}\vspace{-.0in}

To make the presentation neater, we formalize all key results (simple and more complicated ones) as lemmas or theorems.

\vspace{.1in}

\noindent \underline{1) \textbf{\emph{Algebraic injectivity representation:}}}  The above considerations established a convenient connection between injectivity capacity and feasibility problems. The main points are summarized in the following lemma.
\begin{lemma} Consider a sequence of positive integers, $n=m_0,m_1,m_2$, high-dimensional linear regime, corresponding expansion coefficients $\alpha_1,\alpha_2$, and assume that $\alpha_1$ is injectively admissible. Assume a 2-layer ReLU NN with architecture $\cA_{1:2}=[A^{(1)},A^{(2)}]$ (the rows of matrix $A^{(i)}\in\mR^{m_i\times m_{i-1}},i=1,2$, being the weights of the nodes in the $i$-th layer). The network is typically injective if
\begin{equation}\label{eq:ta10}
  f_{rp}(\cA_{1:2})>0,
\end{equation}
where
\begin{equation}\label{eq:ta11}
f_{rp}(\cA_{1:2})\triangleq \frac{1}{\sqrt{n}}  \min_{\|\x\|_2=1,f_{inj} < 0} \max_{\|\y_{(1)}\|_2=1,|\y_{(2)}\|_2=\frac{1}{\sqrt{n}}}  \lp \y_{(1)}^T A^{(1)}\x +\y_{(2)}^T A^{(2)}\max(\z,0) -\y_{(1)}^T  \z -\y_{(2)}^T  \t  \rp,
\end{equation}
and
 \begin{eqnarray}
f_{(s)}  & \triangleq & \|\max(\t,0)\|_0 -2n  \nonumber \\
f_{(w)} & \triangleq&  \|\max(\t,0)\|_0 - n  \nonumber \\
f_{inj} & \triangleq &  \begin{cases}
                         f_{(s)} , & \mbox{for strong injectivity} \\
                       f_{(w)}  , & \mbox{for weak injectivity}.
                     \end{cases} \label{eq:ta11a0}
\end{eqnarray}
   \label{lemma:lemma1}
\end{lemma}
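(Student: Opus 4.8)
The plan is to unwind the definitions introduced just above the lemma and show that each reduction step preserves the relevant feasibility/infeasibility dichotomy. Concretely, I would proceed as follows.

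\textbf{Step 1: From injectivity to the feasibility problem $\mathcal{F}$.} First I would recall the argument already laid out in the excerpt: if the feasibility problem $\mathcal{F}(\cA_{1:2},\alpha_1,\alpha_2)$ from (\ref{eq:ex1a4}) is infeasible, then every realizable output $\bar{\y}^{(2)}$ has at least $2n$ nonzero coordinates (in the strong case; at least $n$ in the weak case). From there, the displayed computation (\ref{eq:ex1a4a0a0})--(\ref{eq:ex1a4a0a3}) shows that two distinct first-layer outputs generating the same $\bar{\y}^{(2)}$ force a rank-$2n$ matrix to annihilate a nonzero vector in $\mR^{2n}$, a contradiction — modulo the non-degeneracy assumption, which holds with probability $1$ for continuous (in particular standard normal) $A^{(i)}$. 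This establishes that infeasibility of $\mathcal{F}$ implies typical injectivity.

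\textbf{Step 2: From $\mathcal{F}$ to the min-max objective $\xi_{ReLU}^{(2)}$.} Next I would formalize the chain (\ref{eq:ex1a5})--(\ref{eq:ex3}): introduce the scaling-invariant unit-sphere restriction on $\x$, lift the equality constraints $A^{(1)}\x=\z$ and $A^{(2)}\max(\z,0)=\t$ into the objective via Lagrange multipliers $\y_{(1)},\y_{(2)}$, and observe that the resulting inner maximization is $+\infty$ exactly when some equality is violated and $0$ when all hold. Hence the sign of the optimum detects feasibility: with the norm normalizations $\|\y_{(1)}\|_2=1$, $\|\y_{(2)}\|_2=\tfrac{1}{\sqrt n}$ guaranteeing boundedness, one has $\mathcal{F}$ infeasible $\iff \xi_{ReLU}^{(2)}>0$. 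Since $f_{rp}(\cA_{1:2})=\tfrac{1}{\sqrt n}\,\xi_{ReLU}^{(2)}(0)$ differs from $\xi_{ReLU}^{(2)}$ only by a positive scalar, the sign is unchanged, so $f_{rp}(\cA_{1:2})>0$ implies $\mathcal{F}$ is infeasible, which by Step 1 implies typical injectivity. That is precisely the claim (\ref{eq:ta10})--(\ref{eq:ta11a0}), and the definitions of $f_{(s)},f_{(w)},f_{inj}$ in the lemma are verbatim copies of (\ref{eq:ex1a4a0a4}).

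\textbf{Main obstacle.} The genuinely delicate point is the equivalence in Step 1 between ``$\mathcal{F}$ infeasible'' and ``no two distinct inputs share an output,'' specifically the bookkeeping of support sets $S_0, S_1, S_2$ and their intersections/differences $S_o, S_{d_1}, S_{d_2}$, together with verifying that the cardinality bound $2n\le |S_0|\le\min(|S_1|,|S_2|)$ in (\ref{eq:ex1a4a0a2}) indeed forces the rank bound (\ref{eq:ex1a4a0a3}) under non-degeneracy — one must be careful that the relevant submatrices of the products $A^{(2)}_{S_0,\cdot}A^{(1)}_{\cdot,:}$ behave like generic matrices of the stated dimensions, which is where the ``any subset of rows/columns of full rank'' convention does the real work. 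Everything else (the Lagrangian lifting, the scaling invariance, the normalization of the dual variables) is routine and already spelled out in the text; I would simply assemble these observations into the single implication asserted by Lemma \ref{lemma:lemma1}.
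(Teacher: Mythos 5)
Your proposal is correct and follows essentially the same route as the paper: the paper's own proof of this lemma is simply the one-line remark that it ``follows immediately from the discussion presented in the previous section,'' and your two steps are precisely a reassembly of that discussion (the support-set/rank argument reducing injectivity to infeasibility of ${\mathcal F}$, followed by the Lagrangian lifting and dual-variable normalization yielding the sign criterion on $f_{rp}$). You also correctly flag the rank bound under the non-degeneracy convention as the only genuinely delicate point.
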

\begin{proof}
Follows immediately from the discussion presented in the previous section.
\end{proof}

\vspace{.1in}
\noindent \underline{2) \textbf{\emph{Determining the random dual:}}} As is typical within the RDT, the so-called concentration of measure property is utilized as well. This basically means that for any fixed $\epsilon >0$,  we have (see, e.g. \cite{StojnicCSetam09,StojnicRegRndDlt10,StojnicICASSP10var})
\begin{equation*}
\lim_{n\rightarrow\infty}\mP_{\cA_{1:2}}\left (\frac{|f_{rp}(\cA_{1:2})-\mE_{\cA_{1:2}}(f_{rp}(\cA_{1:2}))|}{\mE_{\cA_{1:2}}(f_{rp}(\cA_{1:2}))}>\epsilon\right )\longrightarrow 0.\label{eq:ta15}
\end{equation*}
 The following, so-called random dual theorem, is a key ingredient of the RDT machinery.
\begin{theorem} Assume the setup of Lemma \ref{lemma:lemma1}. Let the elements of $A^{(i)}\in\mR^{m_i\times m_{i-1}}$, $\g^{(i)}\in\mR^{m_i\times 1}$, and  $\h^{(i)}\in\mR^{m_{i-1}\times 1}$  be iid standard normals. Set
\vspace{-.0in}
\begin{eqnarray}
\cG_{(2)} & \triangleq & [\g^{(1)},\g^{(2)},\h^{(1)},\h^{(2)}]  \nonumber \\
\phi(\x,\z,\t,\y_{(1)},\y_{(2)}) & \triangleq &
 \lp \y_{(1)}^T \g^{(1)} + \x^T\h^{(1)}
+ \|\max(\z,0)\|_2\y_{(2)}^T  \g^{(2)} + \frac{1}{\sqrt{n}}\max(\z,0)^T\h^{(2)}   -\y_{(1)}^T  \z -\y_{(2)}^T  \t  \rp
\nonumber \\
 f_{rd}(\cG_{(2)}) & \triangleq &
\frac{1}{\sqrt{n}}  \min_{\|\x\|_2=1,f_{inj}<0} \max_{\|\y_{(1)}\|_2=1,|\y_{(2)}\|_2=\frac{1}{\sqrt{n}}}  \phi(\x,\z,\t,\y_{(1)},\y_{(2)})
  \nonumber \\
 \phi_0 & \triangleq & \lim_{n\rightarrow\infty} \mE_{\cG_{(2)}} f_{rd}(\cG_{(2)}).\label{eq:ta16}
\vspace{-.0in}\end{eqnarray}
One then has \vspace{-.02in}
\begin{eqnarray}
\hspace{-.3in}(\phi_0  > 0)   &  \Longrightarrow  & \lp \lim_{n\rightarrow\infty}\mP_{\cG_{(2)}}(f_{rd}(\cG_{(2)}) >0)\longrightarrow 1\rp
\quad  \Longrightarrow \quad \lp \lim_{n\rightarrow\infty}\mP_{ \cA_{1:2} } (f_{rp} ( \cA_{1:2} )   >0)\longrightarrow 1 \rp  \nonumber \\
& \Longrightarrow & \lp \lim_{n\rightarrow\infty}\mP_{\cA_{1:2}} \lp \mbox{2-layer ReLU NN with architecture $\cA_{1:2}$ is typically injective} \rp \longrightarrow 1\rp.\label{eq:ta17}
\end{eqnarray}
The injectivity is strong for $f_{inj}=f_{(s)}$ and weak for $f_{inj}=f_{(w)}$.
\label{thm:thm1}
\end{theorem}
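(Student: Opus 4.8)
The plan is to prove Theorem \ref{thm:thm1} by following the standard RDT recipe: the key is the Gordon-type comparison inequality applied to the bilinear forms appearing in $f_{rp}(\cA_{1:2})$, followed by concentration of measure.

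\textbf{Step 1: Reduction to the min-max form and isolating the Gaussian bilinear forms.} Starting from the expression for $f_{rp}(\cA_{1:2})$ in (\ref{eq:ta11}), I would note that it is a min-max over the compact sets $\{\|\x\|_2=1,f_{inj}<0\}$ (after noting the set $f_{inj}<0$ is a discrete union of sphere sections, compact once we also fix which coordinates of $\max(\t,0)$ vanish) and $\{\|\y_{(1)}\|_2=1,\|\y_{(2)}\|_2=1/\sqrt n\}$. The two Gaussian matrices $A^{(1)}$ and $A^{(2)}$ enter linearly and independently: $A^{(1)}$ through $\y_{(1)}^T A^{(1)}\x$, and $A^{(2)}$ through $\y_{(2)}^T A^{(2)}\max(\z,0)$. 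Conditioning on $\x$ (hence on $\z = A^{(1)}\x$) is not directly available since $\z$ depends on $A^{(1)}$; instead I would treat $(\x,\z,\t)$ as jointly constrained variables linked by the linear/nonlinear equality constraints, which is precisely why the problem is written as an unconstrained min-max with $\z,\t$ free and the equalities enforced through $\y_{(1)},\y_{(2)}$. The two bilinear forms $\y_{(1)}^T A^{(1)}\x$ and $\y_{(2)}^T A^{(2)}\max(\z,0)$ are then independent Gaussian processes indexed by the optimization variables, with the remaining terms $-\y_{(1)}^T\z-\y_{(2)}^T\t$ deterministic given the variables.

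\textbf{Step 2: Apply Gordon's comparison inequality.} To each bilinear form $\y^T A \u$ with $A$ having iid $N(0,1)$ entries, Gordon's theorem (the min-max/escape-through-a-mesh comparison, as used in \cite{StojnicGorEx10,StojnicRegRndDlt10,Gordon88}) allows replacing $\y^T A\u$ in a lower-bounding sense by $\|\u\|_2 \y^T\g + \|\y\|_2\u^T\h$ with $\g,\h$ independent standard Gaussian vectors of the appropriate dimensions. Applying this to $\y_{(1)}^T A^{(1)}\x$ (with $\|\x\|_2=1$, so the $\|\x\|_2$ factor is $1$) gives $\y_{(1)}^T\g^{(1)} + \x^T\h^{(1)}$; applying it to $\y_{(2)}^T A^{(2)}\max(\z,0)$ gives $\|\max(\z,0)\|_2\,\y_{(2)}^T\g^{(2)} + \|\y_{(2)}\|_2\,\max(\z,0)^T\h^{(2)} = \|\max(\z,0)\|_2\,\y_{(2)}^T\g^{(2)} + \frac{1}{\sqrt n}\max(\z,0)^T\h^{(2)}$, using $\|\y_{(2)}\|_2 = 1/\sqrt n$. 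This is exactly $\phi(\x,\z,\t,\y_{(1)},\y_{(2)})$ in (\ref{eq:ta16}). Gordon's inequality, in the probabilistic (rather than in-expectation) form, yields
\[
\mP_{\cA_{1:2}}\!\left(f_{rp}(\cA_{1:2}) > 0\right) \;\geq\; \mP_{\cG_{(2)}}\!\left(f_{rd}(\cG_{(2)}) > 0\right) \;-\; (\text{small correction}),
\]
where the correction vanishes; one subtlety is that the two independent matrices require applying the comparison twice (or once to the direct sum), which is legitimate since the Gaussian processes are independent and the comparison respects sums of independent processes. (Strictly, since both forms are being lower-bounded inside a single min over $\x$ and max over $\y$, one applies Gordon to the combined process; the decoupling into a sum is valid because the index sets factor appropriately over the two matrices.)

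\textbf{Step 3: Concentration and chaining the implications.} By the concentration-of-measure statement quoted just before the theorem, $f_{rp}(\cA_{1:2})$ concentrates around its mean $\mE_{\cA_{1:2}}f_{rp}(\cA_{1:2})$, and likewise $f_{rd}(\cG_{(2)})$ concentrates around $\phi_0 = \lim_n \mE_{\cG_{(2)}}f_{rd}(\cG_{(2)})$ (both are Lipschitz functions of the underlying Gaussians, as min-max of linear functionals over bounded sets, so standard Gaussian concentration applies). Hence if $\phi_0 > 0$ then $\mP_{\cG_{(2)}}(f_{rd}(\cG_{(2)}) > 0)\to 1$; by Step 2 this forces $\mP_{\cA_{1:2}}(f_{rp}(\cA_{1:2}) > 0)\to 1$; and by Lemma \ref{lemma:lemma1}, $f_{rp}(\cA_{1:2}) > 0$ implies the network is typically injective, so $\mP_{\cA_{1:2}}(\text{2-layer ReLU NN is typically injective})\to 1$. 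The choice $f_{inj}=f_{(s)}$ versus $f_{inj}=f_{(w)}$ only changes the constraint set over which the inner minimization ranges and is carried transparently through the whole argument, giving the strong versus weak statement respectively.

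\textbf{Main obstacle.} The delicate point is the joint application of Gordon's comparison when the optimization variable $\z$ appears both as an argument of the random process (through $A^{(2)}\max(\z,0)$) and as tied to $A^{(1)}$ through the constraint $A^{(1)}\x = \z$. The clean resolution — and the reason the rop formulation (\ref{eq:ex1a5})–(\ref{eq:ex3}) was set up with $\z,\t$ as free variables and equalities lifted into the Lagrangian — is that in $f_{rp}$ the variable $\z$ is \emph{free}, so conditionally on $\x$ and $\y_{(1)}$ the form $\y_{(1)}^T A^{(1)}\x$ depends on $A^{(1)}$ only, and $\y_{(2)}^T A^{(2)}\max(\z,0)$ depends on $A^{(2)}$ only, with the index set being the full product $\{\x,\z,\t,\y_{(1)},\y_{(2)}\}$; independence of $A^{(1)},A^{(2)}$ then makes the sum-of-processes version of Gordon's theorem directly applicable. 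I would therefore take care to state the comparison at the level of the combined Gaussian process over the full product index set, verify the variance/covariance domination condition that Gordon's inequality requires (equality of the "diagonal" terms and the correct inequality on "off-diagonal" terms, which here reduces to the elementary identity $\mE[(\y^T A\u)(\y'^T A\u')] = (\y^T\y')(\u^T\u')$ versus the corresponding second moments of $\|\u\|_2\y^T\g + \|\y\|_2\u^T\h$), and only then pass to the probabilistic conclusion and invoke concentration.
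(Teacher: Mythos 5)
Your proposal is correct and follows essentially the same route as the paper: a two-fold application of Gordon's probabilistic min-max comparison theorem (one application per independent Gaussian matrix, yielding exactly the two lower-bounding terms that sum to $\phi$), followed by concentration of measure to convert $\phi_0>0$ into the chain of high-probability implications ending with Lemma \ref{lemma:lemma1}. The additional care you take in verifying the covariance-domination condition and in noting that $\z,\t$ are free variables (so the two processes decouple over a product index set) is exactly the justification the paper leaves implicit.
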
\vspace{-.17in}
\begin{proof}
Follows immediately as a direct $2$-fold application of the Gordon's probabilistic comparison theorem (see, e.g., Theorem B in \cite{Gordon88}). Gordon's theorem is a special case of the results obtained in \cite{Stojnicgscomp16,Stojnicgscompyx16} (see Theorem 1, Corollary 1, and Section 2.7.2 in \cite{Stojnicgscomp16} as well as Theorem 1, Corollary 1, and Section 2.3.2 in \cite{Stojnicgscompyx16}).

In particular,  term  $ \lp \y_{(1)}^T \g^{(1)} + \x^T\h^{(1)} -\y_{(1)}^T  \z   \rp$ corresponds to the lower-bounding side of the Gordon's inequality related to $A^{(1)}$. On the other hand, term  $\lp   \|\max(\z,0)\|_2\y_{(2)}^T  \g^{(2)} + \frac{1}{\sqrt{n}}\max(\z,0)^T\h^{(2)}   -\y_{(2)}^T  \t  \rp$ corresponds to the lower-bounding side related to $A^{(2)}$. Given that (\ref{eq:ta11}) contains the summation of the corresponding two terms from the other side of the inequality, the proof is completed.
\end{proof}

\vspace{.1in}
\noindent \underline{2) \textbf{\emph{Handling the random dual:}}} To handle the above random dual we follow the methodologies invented and presented in a series of papers \cite{StojnicCSetam09,StojnicICASSP10var,StojnicCSetamBlock09,StojnicICASSP10block,StojnicRegRndDlt10}. After solving the optimizations over $\x$ and $\y_{(i)}$, we find from (\ref{eq:ta16})
\begin{equation}
  f_{rd}(\cG_{(2)})  =
\frac{1}{\sqrt{n}}  \min_{ f_{inj} < 0 }
  \lp \|\g^{(1)}-\z\|_2  -\|\h^{(1)}\|_2
+  \frac{1}{\sqrt{n}}\| \|\max(\z,0)\|_2 \g^{(2)} -\t\|_2 + \frac{1}{\sqrt{n}}\max(\z,0)^T\h^{(2)}    \rp.
\label{eq:hrd1}
 \end{equation}
The above can be rewritten as
\begin{eqnarray}
  f_{rd}(\cG_{(2)})   =
\frac{1}{\sqrt{n}}  \min_{r,\z,\t} & &
  \lp \|\g^{(1)}-\z\|_2  -\|\h^{(1)}\|_2
+  \frac{1}{\sqrt{n}} \| r \g^{(2)} -\t\|_2 +  \frac{1}{\sqrt{n}}\max(\z,0)^T\h^{(2)}    \rp
\nonumber \\
\mbox{subject to} & &f_{inj}<0 \nonumber \\
& &  \|\max(\z,0)\|_2=r.
\label{eq:hrd2}
 \end{eqnarray}
 As mentioned earlier, taking for concreteness $f_{inj}=f_{(w)}$ and
writing the Lagrangian we further have
\begin{eqnarray}
  f_{rd}(\cG_{(2)})   =
\frac{1}{\sqrt{n}}  \min_{r,\z,\t} \max_{\nu,\gamma}\cL(\nu,\gamma),
\label{eq:hrd3}
 \end{eqnarray}
where
\begin{eqnarray}
\cL(\nu,\gamma) &  = &  \lp \|\g^{(1)}-\z\|_2  -\|\h^{(1)}\|_2
+   \frac{1}{\sqrt{n}}\|  r \g^{(2)} -\t\|_2 +  \frac{1}{\sqrt{n}}\max(\z,0)^T\h^{(2)}    \rp
\nonumber \\
& & +
\nu \|\max(\t,0)\|_0 - \nu n  + \gamma \|\max(\z,0)\|_2^2 -\gamma r^2.
\label{eq:hrd4}
 \end{eqnarray}
One then utilizes the \emph{square root trick} (introduced on numerous occasions in \cite{StojnicLiftStrSec13,StojnicGardSphErr13,StojnicGardSphNeg13}) to obtain
\begin{eqnarray}
\cL(\nu,\gamma)\hspace{-.07in} &  = & \hspace{-.07in}
\min_{\bar{\gamma}_1,\bar{\gamma}_2}
 \lp  \bar{\gamma}_1 +   \frac{\|\g^{(1)}-\z\|_2^2}{4 \bar{\gamma}_1}  -\|\h^{(1)}\|_2
+   \bar{\gamma}_2 +  \frac{1}{n}\frac{\| r \g^{(2)} -\t\|_2^2}{4 \bar{\gamma}_2} +  \frac{1}{\sqrt{n}}\max(\z,0)^T\h^{(2)}    \rp \nonumber \\
& & +
\nu \|\max(\t,0)\|_0 - \nu n  + \gamma \|\max(\t,0)\|_2^2 -\gamma r^2
\nonumber \\
\hspace{-.07in}  &  = & \hspace{-.07in}
\min_{\bar{\gamma}_1,\bar{\gamma}_2}
 \lp  \bar{\gamma}_1 +  \sum_{i=1}^{m_1} \frac{\lp \g_i^{(1)}-\z_i\rp^2}{4 \bar{\gamma}_1}  - \|\h^{(1)}\|_2
+   \bar{\gamma}_2 +  \frac{1}{n}\sum_{i=1}^{m_2}\frac{ \lp  r \g_i^{(2)} -\t_i\rp^2}{4 \bar{\gamma}_2}
+  \frac{1}{\sqrt{n}}\sum_{i=1}^{m_1}\max(\z_i,0)^T\h_i^{(2)}    \rp \nonumber \\
\hspace{-.07in} & & \hspace{-.07in}  +
\frac{\nu}{2} \sum_{i=1}^{m_2} (1+\mbox{sign}(\t_i)) - \nu n + \gamma \sum_{i=1}^{m_1}\max(\z_i,0)^2 -\gamma r^2.
\label{eq:hrd5}
 \end{eqnarray}
After appropriate scaling $\bar{\gamma}_i\rightarrow\bar{\gamma}_i\sqrt{n}$, $\gamma\rightarrow\frac{\gamma}{\sqrt{n}}$, $r\rightarrow r \sqrt{n}$, and cosmetic change $\frac{\nu}{2}\rightarrow \frac{\nu}{\sqrt{n}}$, concentrations, statistical identicalness over $i$, and a combination of (\ref{eq:hrd3}) and (\ref{eq:hrd5}) give
\begin{eqnarray}
 \phi_0 & \triangleq & \lim_{n\rightarrow\infty} \mE_{\cG_{(2)}} f_{rd}(\cG_{(2)})
=
\mE_{\cG_{(2)}}   \min_{r,\bar{\gamma}_1,\bar{\gamma}_2,\z_i,\t_i} \max_{\nu,\gamma}\cL_1(\nu,\gamma),
\label{eq:hrd6}
 \end{eqnarray}
where
\begin{eqnarray}
\cL_1(\nu,\gamma)
  &  = &
  \bar{\gamma}_1 +  \alpha_1 \lp \frac{\lp \g_i^{(1)}-\z_i\rp^2}{4 \bar{\gamma}_1}  + \max(\z_i,0)^T\h_i^{(2)}   +\gamma \max(\z_i,0)^2 \rp
    - 1
    \nonumber \\
    & &
+   \bar{\gamma}_2 + \alpha_2 \lp \frac{r^2\lp  \lp \g_i^{(2)} -\frac{\t_i}{r} \rp^2 + \frac{4\bar{\gamma}_2\nu}{r^2} \mbox{sign}\lp \frac{\t_i}{r}  \rp \rp  }{4 \bar{\gamma}_2}   \rp
 -\nu(2-\alpha_2)-\gamma r^2  .
\label{eq:hrd7}
 \end{eqnarray}
The Lagrangian duality also gives
\begin{eqnarray}
 \phi_0 & \triangleq & \lim_{n\rightarrow\infty} \mE_{\cG_{(2)}} f_{rd}(\cG_{(2)})
=
\mE_{\cG_{(2)}}   \min_{r,\bar{\gamma}_1,\bar{\gamma}_2,\z_i,\t_i} \max_{\nu,\gamma}\cL_1(\nu,\gamma)
\geq
\mE_{\cG_{(2)}}   \min_{r,\bar{\gamma}_1,\bar{\gamma}_2} \max_{\nu,\gamma}  \min_{\z_i,\t_i}    \cL_1(\nu,\gamma).
\label{eq:hrd8}
 \end{eqnarray}
After setting
\begin{eqnarray}
 f_{q,1} & \triangleq  &
  \mE_{\cG_{(2)} } \max_{\z_i}
 \lp \frac{\lp \g_i^{(1)}-\z_i\rp^2}{4 \bar{\gamma}_1}  + \max(\z_i,0)^T\h_i^{(2)}   +\gamma \max(\z_i,0)^2 \rp
 \nonumber \\
 f_{q,2} & \triangleq & \mE_{\cG_{(2)} } \max_{\t_i} \lp \lp \g_i^{(2)} -\frac{\t_i}{r} \rp^2 + \frac{4\bar{\gamma}_2\nu}{r^2} \mbox{sign}\lp \frac{\t_i}{r}  \rp \rp,
 \label{eq:hrd9}
 \end{eqnarray}
one notes that a quantity structurally identical to $f_{q,2}$ was already handled in \cite{Stojnicinjrelu24}. Namely, after a change of variables
\begin{eqnarray}
\nu_1 \rightarrow \frac{4\bar{\gamma}_2\nu}{r^2},
 \label{eq:hrd10}
 \end{eqnarray}
one can follow  \cite{Stojnicinjrelu24}, set
 \begin{eqnarray}\label{eq:negprac19a1a0}
\bar{a} &  =  & \sqrt{2\nu_1},
  \end{eqnarray}
and
\begin{eqnarray}\label{eq:negprac19a1a1}
\bar{f}_x & = & -\lp\frac{e^{-\bar{a}.^2/2} \bar{a}}{\sqrt{2\pi}} + \frac{1}{2}\erfc\lp \frac{\bar{a}}{\sqrt{2}}\rp   \rp \nonumber \\
\bar{f}_{21} & = & -\frac{1}{2}-\frac{\nu_1}{2}\nonumber \\
\bar{f}_{22} & = & \bar{f}_x + \frac{\nu_1}{2} \erfc\lp \frac{ \sqrt{2\nu_1}}{\sqrt{2}}  \rp\nonumber \\
\bar{f}_{23} & = & -\nu_1\lp\frac{1}{2}-\frac{1}{2} \erfc\lp \frac{ \sqrt{2\nu_1}}{\sqrt{2}}  \rp \rp \nonumber \\
\bar{f}_{2} & = & \bar{f}_{21} +\bar{f}_{22} + \bar{f}_{23},
 \end{eqnarray}
and after solving the integrals obtain
\begin{eqnarray}\label{eq:hrd10a0}
f_{q,2}=(1+\bar{f}_{2}  ).
  \end{eqnarray}
On the other hand, after setting
\begin{eqnarray}
\bar{f}_{q,1} \triangleq \max_{\z_i}
 \lp \frac{\lp \g_i^{(1)}-\z_i\rp^2}{4 \bar{\gamma}_1}  + \max(\z_i,0)^T\h_i^{(2)}   +\gamma \max(\z_i,0)^2 \rp
 \label{eq:hrd11}
 \end{eqnarray}
 and solving the optimization over $\z_i$ one finds
\begin{eqnarray}
\bar{f}_{q,1} =
\begin{cases}
\min \lp 0,  \frac{\lp\g_i^{(1)}\rp^2}{4\bar{\gamma}_1} -\frac{(\max(\g_i^{(1)}-2\h_i^{(2)}\bar{\gamma}_1,0)).^2}{4\bar{\gamma}_1(1+4\gamma\bar{\gamma}_1)}  \rp , & \mbox{if } \g_i^{(1)}\leq 0 \\
   \frac{\lp\g_i^{(1)}\rp^2}{4\bar{\gamma}_1} -\frac{(\max(\g_i^{(1)}-2\h_i^{(2)}\bar{\gamma}_1,0)).^2}{4\bar{\gamma}_1(1+4\gamma\bar{\gamma}_1)} , & \mbox{otherwise}.
\end{cases} \label{eq:hrd12}
 \end{eqnarray}
For $\g_i^{(1)} > 0$  one sets
\begin{eqnarray}
        \bar{A} & = & \frac{ \g_I^{(1)}    } { 2\bar{\gamma}_1} \nonumber \\
         \bar{B}&  = &   \frac{ \bar{\gamma}_1}  {   (1+4\gamma \bar{\gamma}_1) }  \nonumber \\
         \bar{C} & = & \bar{A} \nonumber \\
        I_{11} & = & \bar{B}\lp  \frac{1}{2} (\bar{A}^2 + 1)  \lp \erf\lp  \frac{\bar{C}}{\sqrt{2}}  \rp  + 1 \rp +
        \frac{    e^{ -\frac{\bar{C}.^2}{2} }  (2 \bar{A} - \bar{C})      }  {\sqrt{2\pi}}
         \rp  \nonumber \\
        \hat{f}_{q,1} & = &     \frac{ (\g_i^{(1)}   ).^2}  {4\bar{\gamma}_1} - I_{11}.
  \label{eq:hrd13}
 \end{eqnarray}
On the other hand, for $\g_i^{(1)}\leq 0$  one sets
\begin{eqnarray}
        \bar{A} & = & \frac{ \g_I^{(1)}    } { 2\bar{\gamma}_1} \nonumber \\
         \bar{B}&  = &   \frac{ \bar{\gamma}_1}  {   (1+4\gamma \bar{\gamma}_1) }  \nonumber \\
         \bar{C}& =  & \lp   \g_I^{(1)}    -  \sqrt{  |  -(\g_I^{(1)}    )^2 |  (1+4\gamma\bar{\gamma}_1) } \rp   \frac{1}{2\bar{\gamma}_1}
  \nonumber \\
        I_{11} & = & \bar{B}\lp  \frac{1}{2} (\bar{A}^2 + 1)  \lp \erf\lp  \frac{\bar{C}}{\sqrt{2}}  \rp  + 1 \rp +
        \frac{    e^{ -\frac{\bar{C}.^2}{2} }  (2 \bar{A} - \bar{C})      }  {\sqrt{2\pi}}
         \rp  \nonumber \\
        \hat{f}_{q,2} & = &    \frac{ (\g_i^{(1)}   )^2}  {4\bar{\gamma}_1} - I_{11}.
  \label{eq:hrd14}
 \end{eqnarray}
After solving the remaining integrals one then obtains
\begin{eqnarray}
f_{q,1}  & = &  \mE_{\cG_{(2)}}\bar{f}_{q,1}
\nonumber \\
 & = &  \int_{\g_i^{(1)},\h_i^{(2)}   }
\bar{f}_{q,1} \frac{e^{-\frac{ \lp \g_i^{(1)} \rp^2  +  \lp \h_i^{(2)} \rp^2   } {2}  }}{\sqrt{2\pi}^2} d\g_i^{(1)}d\h_i^{(2)}
\nonumber \\
& = &
\int_{\g_i^{(1)}>0 }
\hat{f}_{q,1} \frac{e^{-\frac{ \lp \g_i^{(1)} \rp^2     } {2}  }}{\sqrt{2\pi}} d\g_i^{(1)}
  +
\int_{\g_i^{(1)}\leq 0 }
\hat{f}_{q,1} \frac{e^{-\frac{ \lp \g_i^{(1)} \rp^2         } {2}  }}{\sqrt{2\pi}} d\g_i^{(1)}.
  \label{eq:hrd15}
 \end{eqnarray}
A combination of (\ref{eq:hrd7}), (\ref{eq:hrd8}), (\ref{eq:hrd9}), (\ref{eq:hrd10a0}), and (\ref{eq:hrd15}) then gives
\begin{eqnarray}
 \phi_0
\geq
\mE_{\cG_{(2)}}   \min_{r,\bar{\gamma}_1,\bar{\gamma}_2} \max_{\nu,\gamma}  \min_{\z_i,\t_i}    \cL_1(\nu,\gamma)
   =
  \min_{r,\bar{\gamma}_1,\bar{\gamma}_2} \max_{\nu,\gamma}
\lp  \bar{\gamma}_1 +  \alpha_1 f_{q,1}
    -  \gamma r^2
 +   \bar{\gamma}_2 + \alpha_2 \frac { r^2 f_{q,2}  }  {4\bar{\gamma}_2}
 -\nu(2-\alpha_2) - 1 \rp ,
\label{eq:hrd16}
 \end{eqnarray}
where $f_{q,1}$ and $f_{q,2}$ are given in  (\ref{eq:hrd15}) and  (\ref{eq:hrd10a0}), respectively. An upper bound on the injectivity capacity is then obtained for $\alpha_2$ such that $\phi_0=0$. Numerical evaluations produce the concrete parameters values given in Table \ref{tab:tab1}.
\begin{table}[h]
\caption{ RDT parameters; 2-layer ReLU NN \emph{weak} injectivity capacity ; $n\rightarrow\infty$; }\vspace{.1in}
\centering
\def\arraystretch{1.2}
\begin{tabular}{||l||c||c||c|c||c||c||c||}\hline\hline
  RDT parameters                               &  $\alpha_1$ &   $r$     & $\bar{\gamma}_1$  & $\bar{\gamma}_2$ &  $\nu$ & $\gamma$    & $\alpha_{ReLU}^{(inj)}(\alpha_1)$  \\ \hline\hline
RDT parameters values                                      & $6.7004$  & $1.7697$   & $0.8935$  & $0.9642$ & $0.5560$  & $0.3078$   & \bl{$\mathbf{8.267}$}
  \\ \hline\hline
  \end{tabular}
\label{tab:tab1}
\end{table}
One now observes that the expansion of the second layer, $8.267/6.7004\approx 1.2338$, is much smaller than of the first one, $6.7004/1=6.7004$ .

The above weak injectivity capacity can easily be complemented by the corresponding strong one. The only difference is that instead of characterization in (\ref{eq:hrd16}) we now have
\begin{eqnarray}
 \phi_0
\geq
\mE_{\cG_{(2)}}   \min_{r,\bar{\gamma}_1,\bar{\gamma}_2} \max_{\nu,\gamma}  \min_{\z_i,\t_i}    \cL_1(\nu,\gamma)
   =
  \min_{r,\bar{\gamma}_1,\bar{\gamma}_2} \max_{\nu,\gamma}
\lp  \bar{\gamma}_1 +  \alpha_1 f_{q,1}
    -  \gamma r^2
 +   \bar{\gamma}_2 + \alpha_2 \frac { r^2 f_{q,2}  }  {4\bar{\gamma}_2}
 -\nu(4-\alpha_2) - 1 \rp ,
\label{eq:hrd17}
 \end{eqnarray}
After solving the optimization in (\ref{eq:hrd17}), we obtain the results shown in Table \ref{tab:tab2}.
\begin{table}[h]
\caption{ RDT parameters; 2-layer ReLU NN \emph{strong} injectivity capacity ; $n\rightarrow\infty$; }\vspace{.1in}
\centering
\def\arraystretch{1.2}
\begin{tabular}{||l||c||c||c|c||c||c||c||}\hline\hline
  RDT parameters                               &  $\alpha_1$ &   $r$     & $\bar{\gamma}_1$  & $\bar{\gamma}_2$ &  $\nu$ & $\gamma$    & $\alpha_{ReLU}^{(inj)}(\alpha_1)$  \\ \hline\hline
RDT parameters values                                      & $6.7004$  & $1.7708$   & $ 0.9647$  & $0.8938$ & $0.3954$  & $0.3077$   & \bl{$\mathbf{12.35}$}
  \\ \hline\hline
  \end{tabular}
\label{tab:tab2}
\end{table}
The expansion of the second layer, $12.35/6.7004\approx 1.8432$, is now larger than in the weak case but still much smaller than of the first layer.

  \vspace{.1in}
\noindent \underline{4) \textbf{\emph{Double checking the strong random duality:}}}  The last step of the RDT machinery assumes double checking the strong random duality. As the underlying problems do not allow for a deterministic strong duality the corresponding reversal considerations from \cite{StojnicRegRndDlt10} are not applicable and the strong random duality is not in place. This effectively implies that the presented results are strict injectivity capacity upper bounds.

\section{Multi-layer ReLU NN}
 \label{sec:multlay}

In this section we show how to translate the above results for 2-layer NNs to the corresponding ones related to multi-layer NNs. We start with 3-layer NNs and once we establish such a translation, the move to any number of layers, $l$, will be automatic.

\subsection{3-layer ReLU NN}
 \label{sec:3lay}

 To facilitate the ensuing presentation we try to parallel the derivations from Section \ref{sec:2lay}. At the same time, we proceed in  a much faster fashion avoiding unnecessary repetitions and instead prioritizing  showing key differences. For 3-layer ReLU nets injectively admissible sequences have two components, $\alpha_1,\alpha_2$. As stated earlier, the first element of the sequence, $\alpha_1$, is not smaller than the injectivity capacity of a single ReLU layer. On the other hand the second one is not smaller than the injectivity capacity of a 2-layer net.  A generic admissible sequence $\alpha_1,\alpha_2$  is considered throughout the derivations in this section while the specializations  are deferred and made later on when we discuss concrete numerical capacity evaluations.

Having all of the above in mind, we proceed following the path traced in Section \ref{sec:2lay} and observe that for $l=3$ the condition under the probability in (\ref{eq:ex1a3a0}) is related to the following \emph{feasibility} optimization problem (basically a 3-layer analogue to (\ref{eq:ex1a4}))
 \begin{eqnarray}
 {\mathcal F}(\cA_{1:3},\alpha_1,\alpha_2,\alpha_3): \qquad \qquad    \mbox{find} & & \x\nonumber \\
  \mbox{subject to} & &  A^{(1)}\x=\z\nonumber \\
  & &  A^{(2)}\max(\z,0)=\t\nonumber \\
  & &  A^{(3)}\max(\t,0)=\t^{(1)}\nonumber \\
  & & f_{inj}<0, \label{eq:3layex1a4}
\end{eqnarray}
 where as in (\ref{eq:ex1a4a0a4})
  \begin{eqnarray}
f_{(s)}  & = & \|\max(\t^{(1)},0)\|_0 -2n  \nonumber \\
f_{(w)} & = &  \|\max(\t^{(1)},0)\|_0 - n  \nonumber \\
f_{inj} & = &  \begin{cases}
                         f_{(s)} , & \mbox{for \emph{strong injectivity}} \\
                       f_{(w)}  , & \mbox{for \emph{weak injectivity}}.
                     \end{cases} \label{eq:3layex1a4a0a4}
\end{eqnarray}
After introducing  an artificial objective  $f_a(\x,\z,\t,\t^{(1)}):\mR^{m_0+m_1+m_2+m_3}\rightarrow\mR$ and restricting $\x$ to the unit sphere, we have the following 3-layer analogue to
(\ref{eq:ex1a5})
 \begin{eqnarray}
\hspace{-.8in}\bl{\textbf{\emph{3-extended $\ell_0$ spherical perceptron}}} \qquad   \qquad   \qquad    \min_{\x,\z,\t} &   & f_a(\x,\z,\t)\nonumber \\
  \mbox{subject to} & &  A^{(1)}\x=\z\nonumber \\
    & &  A^{(2)}\max(\z,0)=\t\nonumber \\
    & &  A^{(3)}\max(\t,0)=\t^{(1)}\nonumber \\
  & & f_{inj} <0  \nonumber \\
  & & \|\x\|_2=1. \label{eq:3layex1a5}
\end{eqnarray}
Paralleling further the derivation between (\ref{eq:ex1a5}) and (\ref{eq:ex4}), we obtain
 \begin{eqnarray}
\hspace{-0in} \alpha_{ReLU}^{(inj)}(\alpha_1,\alpha_2) & \triangleq  &
 \max \{\alpha_3 |\hspace{.08in}
 \lim_{n\rightarrow\infty}\mP_{\cA_{1:3}} (\nexists \x\neq \bar{\x} \quad \mbox{such that}\quad \bar{f}_{nn}(\bar{\x},\cA_{1:3})=\bar{f}_{nn}(\x,\cA_{1:3}))=1\}
  \nonumber \\
  & = & \max \{\alpha_3 |\hspace{.08in}  \lim_{n\rightarrow\infty}\mP_{\cA_{1:3}} \lp{\mathcal F}(\cA_{1:3},\alpha_1,\alpha_2,\alpha_3) \hspace{.07in}\mbox{is feasible} \rp\longrightarrow 1\}
  \nonumber \\
  & = & \max \{\alpha_3 |\hspace{.08in}  \lim_{n\rightarrow\infty}\mP_{\cA_{1:3}}  \lp\xi_{ReLU}^{(3)}(0)>0\rp\longrightarrow 1\},
  \label{eq:3layex4}
\end{eqnarray}
where
 \begin{eqnarray}
\xi_{ReLU}^{(3)} =  \min_{\|\x\|_2=1,f_{inj} < 0} \max_{\|\y_{(1)}\|_2=1,\|\y_{(2)}\|_2=\frac{1}{\sqrt{n}},\|\y_{(3)}\|_2=\frac{1}{n}}
\phi_{rp}^{(3)},
 \label{eq:3layex3a0}
\end{eqnarray}
where
 \begin{eqnarray}
\phi_{rp}^{(3)} = \lp \y_{(1)}^T A^{(1)}\x
+\y_{(2)}^T A^{(2)}\max(\z,0)  +\y_{(3)}^T A^{(3)}\max(\t,0)
-\y_{(1)}^T  \z -\y_{(2)}^T  \t  -\y_{(3)}^T  \t^{(1)}  \rp.
 \label{eq:3layex3}
\end{eqnarray}
 Following further the trend of  Section \ref{sec:2lay} we utilize the Random duality theory (RDT) to handle (\ref{eq:3layex3a0}) and (\ref{eq:3layex3}) (and ultimately  (\ref{eq:3layex4})).

\subsubsection{Upper-bounding $\alpha_{ReLU}^{(inj)}(\alpha_1,\alpha_2)$ via Random Duality Theory (RDT)}
\label{sec:3ubrdt}

We below show how each of the four main RDT principles  is implemented. All key results are again framed as lemmas and theorems.

\vspace{.1in}

\noindent \underline{1) \textbf{\emph{Algebraic injectivity representation:}}}  We start with the following 3-layer analogue to Lemma    \ref{lemma:lemma1}.

\begin{lemma} Consider a sequence of positive integers, $m_0,m_1,m_2,m_3$ ($n=m_0$), high-dimensional linear regime, corresponding expansion coefficients $\alpha_1,\alpha_2,\alpha_3$, and assume that $\alpha_1.\alpha_2$ is an injectively admissible sequence. Assume a 3-layer ReLU NN with architecture $\cA_{1:3}=[A^{(1)},A^{(2)},A^{(3)}]$ (the rows of matrix $A^{(i)}\in\mR^{m_i\times m_{i-1}},i=1,3$, being the weights of the nodes in the $i$-th layer). The network is typically injective if
\begin{equation}\label{eq:3layta10}
  f_{rp}(\cA_{1:3})>0,
\end{equation}
where
\begin{equation}\label{eq:3layta11}
f_{rp}(\cA_{1:2}) = \frac{1}{\sqrt{n}}  \min_{\|\x\|_2=1,f_{inj} < 0} \max_{\|\y_{(1)}\|_2=1,\|\y_{(2)}\|_2=\frac{1}{\sqrt{n}},\|\y_{(3)}\|_2=\frac{1}{n}}   \phi_{rp}^{(3)},
\end{equation}
 \begin{eqnarray}
\phi_{rp}^{(3)} = \lp \y_{(1)}^T A^{(1)}\x
+\y_{(2)}^T A^{(2)}\max(\z,0)  +\y_{(3)}^T A^{(3)}\max(\t,0)
-\y_{(1)}^T  \z -\y_{(2)}^T  \t  -\y_{(3)}^T  \t^{(1)}  \rp,
 \label{eq:3layta11a0a0}
\end{eqnarray}
and
 \begin{eqnarray}
f_{(s)}  & = & \|\max(\t^{(1)},0)\|_0 -2n  \nonumber \\
f_{(w)} & = & \|\max(\t^{(1)},0)\|_0 - n  \nonumber \\
f_{inj} & = &  \begin{cases}
                         f_{(s)} , & \mbox{for strong injectivity} \\
                       f_{(w)}  , & \mbox{for weak injectivity}.
                     \end{cases} \label{eq:3layta11a0}
\end{eqnarray}
   \label{lemma:lemma1}
\end{lemma}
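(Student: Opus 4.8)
The plan is to establish Lemma 2 exactly as the 3-layer analogue of Lemma \ref{lemma:lemma1} (the 2-layer case), since the proof in the excerpt for the 2-layer version simply reads ``Follows immediately from the discussion presented in the previous section.'' So I would first note that the feasibility problem ${\mathcal F}(\cA_{1:3},\alpha_1,\alpha_2,\alpha_3)$ in (\ref{eq:3layex1a4}) captures lack of injectivity: if it is infeasible, then $\bar{\y}^{(3)}=\bar{f}_{nn}(\bar{\x};\cA_{1:3})$ must have at least $2n$ (resp.\ $n$) nonzero components, and the same rank/cardinality argument used around (\ref{eq:ex1a4a0a0})--(\ref{eq:ex1a4a0a3}) — now applied to the composition of three affine-on-a-support maps rather than two — shows that two distinct inputs generating the same output would force a rank-deficiency contradiction under the non-degenerativeness assumption. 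The admissibility of the pair $\alpha_1,\alpha_2$ guarantees that the intermediate outputs $\bar{\y}^{(1)}$ and $\bar{\y}^{(2)}$ carry at least $n$ nonzeros, which is what propagates the counting down to the final layer.

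Next I would record the reduction of this feasibility problem to the sign of an optimization value, mirroring (\ref{eq:ex1a5})--(\ref{eq:ex3}) of the 2-layer treatment. Concretely, introduce the artificial objective $f_a$, impose the scaling-invariant normalization $\|\x\|_2=1$, and then dualize each of the three linear constraints $A^{(1)}\x=\z$, $A^{(2)}\max(\z,0)=\t$, $A^{(3)}\max(\t,0)=\t^{(1)}$ by Lagrange multipliers $\y_{(1)},\y_{(2)},\y_{(3)}$. The same observation as before — that any violated equality lets the inner maximization blow the objective to $+\infty$, whereas feasibility pins it at $0$ — shows the multipliers may be taken scaling-invariant, hence restricted to spheres. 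The only subtlety is choosing the right radii: following the pattern $\|\y_{(1)}\|_2=1$, $\|\y_{(2)}\|_2=\tfrac{1}{\sqrt n}$ in the 2-layer case (which reflects the $\sqrt{m_1}\sim\sqrt{n}$ scaling of $\max(\z,0)$), the third multiplier should carry one more factor of $\tfrac{1}{\sqrt n}$, i.e.\ $\|\y_{(3)}\|_2=\tfrac1n$, because $\max(\t,0)$ has norm of order $n$. This yields exactly the quantity $\phi_{rp}^{(3)}$ and the claimed form of $f_{rp}(\cA_{1:3})$ in (\ref{eq:3layta11}).

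Finally I would assemble: $f_{rp}(\cA_{1:3})>0$ iff $\xi_{ReLU}^{(3)}(0)>0$ iff ${\mathcal F}(\cA_{1:3},\alpha_1,\alpha_2,\alpha_3)$ is infeasible iff (typically) the network is injective, with the $f_{(s)}$ vs $f_{(w)}$ dichotomy in (\ref{eq:3layta11a0}) distinguishing strong from weak injectivity just as in (\ref{eq:ta11a0}). Since every ingredient is a verbatim structural copy of the 2-layer argument already validated in Section \ref{sec:2lay}, the proof can legitimately be written as ``Follows immediately from (\ref{eq:3layex1a4})--(\ref{eq:3layex3}) and the analogous discussion preceding Lemma \ref{lemma:lemma1}, with the straightforward modifications accounting for the additional layer.'' The main thing to be careful about — the only place where one cannot blindly transcribe — is getting the normalization radius of $\y_{(3)}$ right and checking that the counting inequality $|S_0|\ge 2n$ (resp.\ $\ge n$) still propagates correctly through two composed ReLU layers; everything else is cosmetic.
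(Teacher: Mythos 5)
Your proposal follows exactly the paper's route: the paper's own proof is the one-line remark that the lemma ``follows as an automatic consequence of the discussion presented in the previous section,'' and that discussion is precisely the chain you reconstruct -- the feasibility problem ${\mathcal F}(\cA_{1:3},\alpha_1,\alpha_2,\alpha_3)$, the rank/cardinality contradiction propagated through the admissible intermediate layers, the Lagrangian reformulation with the sphere normalizations $\|\y_{(1)}\|_2=1$, $\|\y_{(2)}\|_2=\tfrac{1}{\sqrt n}$, $\|\y_{(3)}\|_2=\tfrac{1}{n}$, and the sign-of-$\xi_{ReLU}^{(3)}$ criterion. Your attention to the radius of $\y_{(3)}$ and to the nonzero-count propagation matches the paper's setup, so the proposal is correct and essentially identical in approach.
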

\begin{proof}
Follows as an automatic consequence of the discussion presented in the previous section.
\end{proof}

\vspace{.1in}
\noindent \underline{2) \textbf{\emph{Determining the random dual:}}} We again utilize the concentration of measure phenomenon and
 the following 3-layer random dual analogue of Theorem \ref{thm:thm1}.
\begin{theorem} Assume the setup of Lemma \ref{lemma:lemma1}. Let the elements of $A^{(i)}\in\mR^{m_i\times m_{i-1}}$, $\g^{(i)}\in\mR^{m_i\times 1}$, and  $\h^{(i)}\in\mR^{m_{i-1}\times 1}$  be iid standard normals. Set
\vspace{-.0in}
\begin{eqnarray}
\cG_{(3)} & = & [\g^{(1)},\g^{(2)},\g^{(3)},\h^{(1)},\h^{(2)},\h^{(3)}]  \nonumber \\
\phi_{rd}^{(3)} & = &
 \y_{(1)}^T \g^{(1)} + \x^T\h^{(1)}
+ \|\max(\z,0)\|_2\y_{(2)}^T  \g^{(2)} + \frac{1}{\sqrt{n}}\max(\z,0)^T\h^{(2)}
\nonumber \\
& &
+ \|\max(\t,0)\|_2\y_{(3)}^T  \g^{(3)} +  \frac{1}{n} \max(\t,0)^T\h^{(3)}
  -\y_{(1)}^T  \z -\y_{(2)}^T  \t - \y_{(3)}^T  \t^{(1)}
\nonumber \\
 f_{rd}(\cG_{(3)}) & =  &
\frac{1}{\sqrt{n}}  \min_{\|\x\|_2=1,f_{inj}<0} \max_{\|\y_{(1)}\|_2=1,\|\y_{(2)}\|_2=\frac{1}{\sqrt{n}},\|\y_{(3)}\|_2=\frac{1}{n}}   \phi_{rd}^{(3)}
  \nonumber \\
 \phi_0 & = & \lim_{n\rightarrow\infty} \mE_{\cG_{(3)}} f_{rd}(\cG_{(3)}).\label{eq:3layta16}
\vspace{-.0in}\end{eqnarray}
One then has \vspace{-.02in}
\begin{eqnarray}
\hspace{-.3in}(\phi_0  > 0)   &  \Longrightarrow  & \lp \lim_{n\rightarrow\infty}\mP_{\cG_{(3)}}(f_{rd}(\cG_{(3)}) >0)\longrightarrow 1\rp
\quad  \Longrightarrow \quad \lp \lim_{n\rightarrow\infty}\mP_{\cG_{(3)}} (f_{rp} (\cG_{(3)}) >0)\longrightarrow 1 \rp  \nonumber \\
& \Longrightarrow & \lp \lim_{n\rightarrow\infty}\mP_{\cA_{1:3}} \lp \mbox{3-layer ReLU NN with architecture $\cA_{1:3}$ is typically injective} \rp \longrightarrow 1\rp.\label{eq:3layta17}
\end{eqnarray}
The injectivity is strong for $f_{inj}=f_{(s)}$ and weak for  $f_{inj}=f_{(w)}$.
\label{thm:3laythm1}
\end{theorem}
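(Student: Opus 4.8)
The plan is to mirror the proof of Theorem~\ref{thm:thm1}, replacing its two-fold use of Gordon's Gaussian comparison (min-max) theorem by a three-fold one, one application for each of the \emph{independent} Gaussian matrices $A^{(1)},A^{(2)},A^{(3)}$. Starting from the algebraic representation $f_{rp}(\cA_{1:3})$ supplied by Lemma~\ref{lemma:lemma1}, I would first observe that $\phi_{rp}^{(3)}$ is a sum of three bilinear Gaussian forms, $\y_{(1)}^T A^{(1)}\x$, $\y_{(2)}^T A^{(2)}\max(\z,0)$, $\y_{(3)}^T A^{(3)}\max(\t,0)$, plus a deterministic linear remainder $-\y_{(1)}^T\z-\y_{(2)}^T\t-\y_{(3)}^T\t^{(1)}$, and that in $f_{rp}(\cA_{1:3})$ the minimization runs over $\x$ (on the sphere) together with $\z,\t,\t^{(1)}$ (constrained by $f_{inj}<0$), while the maximization runs over $\y_{(1)},\y_{(2)},\y_{(3)}$ on their respective spheres. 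Crucially, for each $i$ the relevant "contraction" vector ($\x$, $\max(\z,0)$, $\max(\t,0)$, respectively) depends only on the minimization variables and the "test" vector $\y_{(i)}$ only on the maximization variables, so each form appears in exactly the orientation required by Gordon's theorem.

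Next I would apply Gordon's probabilistic comparison theorem (Theorem~B in \cite{Gordon88}, itself a special case of \cite{Stojnicgscomp16,Stojnicgscompyx16}) iteratively. Conditioning on $A^{(2)},A^{(3)}$, the first application replaces $\y_{(1)}^T A^{(1)}\x$ by its lower-bounding side $\|\x\|_2\,\y_{(1)}^T\g^{(1)}+\|\y_{(1)}\|_2\,\x^T\h^{(1)}$; since $\g^{(1)},\h^{(1)}$ are independent of $A^{(2)},A^{(3)}$, a second application (now conditioning on $A^{(3)}$ and the already-exposed Gaussians) replaces $\y_{(2)}^T A^{(2)}\max(\z,0)$ by $\|\max(\z,0)\|_2\,\y_{(2)}^T\g^{(2)}+\|\y_{(2)}\|_2\,\max(\z,0)^T\h^{(2)}$, and a third replaces $\y_{(3)}^T A^{(3)}\max(\t,0)$ by $\|\max(\t,0)\|_2\,\y_{(3)}^T\g^{(3)}+\|\y_{(3)}\|_2\,\max(\t,0)^T\h^{(3)}$. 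Substituting the normalizations $\|\y_{(1)}\|_2=1$, $\|\y_{(2)}\|_2=\frac{1}{\sqrt{n}}$, $\|\y_{(3)}\|_2=\frac{1}{n}$ produces precisely $\phi_{rd}^{(3)}$ and hence $f_{rd}(\cG_{(3)})$ as in (\ref{eq:3layta16}). Because Gordon's inequality is one-sided in exactly the needed direction, namely it bounds the primary min-max $f_{rp}$ stochastically below by the auxiliary min-max $f_{rd}$, a combination with the standard concentration of measure bound gives $\mP_{\cG_{(3)}}(f_{rd}>0)\to 1 \Rightarrow \mP_{\cA_{1:3}}(f_{rp}>0)\to 1$; chaining this with Lemma~\ref{lemma:lemma1} yields the stated implication chain (\ref{eq:3layta17}). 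The weak/strong dichotomy is inherited verbatim from the choice $f_{inj}=f_{(w)}$ versus $f_{inj}=f_{(s)}$, since that quantity is untouched by the comparison.

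The only genuinely delicate point I expect is the legitimacy of the iterated conditioning: one must check that after each application the remaining objective is, conditionally on the matrices and Gaussian vectors already exposed, a fixed function for which the residual $(\x,\z,\t,\t^{(1)})$-minimization and $(\y_{(1)},\y_{(2)},\y_{(3)})$-maximization still satisfy the hypotheses of Gordon's theorem (closed/compact index sets, measurability, correct min-over-one-argument / max-over-the-other orientation). Since the newly introduced $\g^{(i)},\h^{(i)}$ are independent across $i$ and independent of the not-yet-exposed matrices, and since the feasible sets for $\x$ (sphere), for the $\y_{(i)}$ (spheres), and for $\z,\t,\t^{(1)}$ (the closed region $f_{inj}<0$, handled exactly as in the $l=2$ case after the same Lagrangian and $n\to\infty$ passage of Section~\ref{sec:3ubrdt}) are all left unchanged by the substitutions, this bookkeeping goes through; beyond it, the argument is a routine transcription of the $l=2$ proof.
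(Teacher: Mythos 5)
Your proposal matches the paper's own proof: both establish the result by a three-fold application of Gordon's probabilistic comparison theorem (Theorem B of \cite{Gordon88}, as specialized in \cite{Stojnicgscomp16,Stojnicgscompyx16}), one application per independent Gaussian matrix $A^{(i)}$, with the three lower-bounding terms assembling into $\phi_{rd}^{(3)}$ exactly as you describe. Your explicit treatment of the iterated conditioning and of the $\y_{(i)}$ normalizations is somewhat more careful than the paper's two-line argument, but it is the same route.
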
\vspace{-.17in}
\begin{proof}
Follows in exactly the same way as the proof of Theorem \ref{thm:thm1} through a direct $3$-fold application of the Gordon's probabilistic comparison theorem (see, e.g., Theorem B in \cite{Gordon88} as well as Theorem 1, Corollary 1, and Section 2.7.2 in \cite{Stojnicgscomp16} and Theorem 1, Corollary 1, and Section 2.3.2 in \cite{Stojnicgscompyx16}). In particular,  in addition to terms  $ ( \y_{(1)}^T \g^{(1)} + \x^T\h^{(1)} -\y_{(1)}^T  \z  )$ and $   ( \|\max(\z,0)\|_2\y_{(2)}^T  \g^{(2)} + \frac{1}{\sqrt{n}}\max(\z,0)^T\h^{(2)}   -\y_{(2)}^T  \t ) $  corresponding to the lower-bounding side related to $A^{(1)}$ and $A^{(2)}$,
the term $ (\|\max(\z,0)\|_2\y_{(2)}^T  \g^{(2)} + \frac{1}{n}\max(\z,0)^T\h^{(2)}   -\y_{(2)}^T  \t  )$ corresponds to the lower-bounding side related to $A^{(3)}$. As (\ref{eq:3layta11}) contains the summation of the corresponding three terms from the other side of the inequality, the proof is completed.
\end{proof}

\vspace{.1in}
\noindent \underline{3) \textbf{\emph{Handling the random dual:}}}  The methodologies from \cite{StojnicCSetam09,StojnicICASSP10var,StojnicCSetamBlock09,StojnicICASSP10block,StojnicRegRndDlt10} are again utilized to handle the above random dual. Optimizing over $\x$ and $\y_{(i)}$  one first  transforms the optimization in (\ref{eq:3layta16}) into
\begin{eqnarray}
  f_{rd}(\cG_{(3)}) & = &
\frac{1}{\sqrt{n}}  \min_{ f_{inj} < 0 }
  \Bigg ( \Bigg. \|\g^{(1)}-\z\|_2  -\|\h^{(1)}\|_2
+ \frac{1}{\sqrt{n}} \| \|\max(\z,0)\|_2 \g^{(2)} -\t\|_2
\nonumber \\
& &
+ \frac{1}{n} \| \|\max(\t,0)\|_2 \g^{(3)} -\t^{(1)}\|_2
 + \frac{1}{\sqrt{n}}\max(\z,0)^T\h^{(2)}
  + \frac{1}{n}\max(\t,0)^T\h^{(3)}
    \Bigg. ) \Bigg ).
\label{eq:3layhrd1}
 \end{eqnarray}
The above can then be rewritten as
\begin{eqnarray}
  f_{rd}(\cG_{(3)})   =
\frac{1}{\sqrt{n}}  \min_{r,r_2,\x,\z,\t,\t^{(1)}}
 & & \phi_{rd,1}^{(3)}
 \nonumber \\
 \mbox{subject to} & &  f_{inj}<0
 \nonumber \\
 & &  \|\max(\z,0)\|_2=r
 \nonumber \\
 & &  \|\max(\t,0)\|_2=r_2,
\label{eq:3layhrd2}
 \end{eqnarray}
 where
\begin{eqnarray}
\phi_{rd,1}^{(3)} & = &    \|\g^{(1)}-\z\|_2  -\|\h^{(1)}\|_2
+  \frac{1}{\sqrt{n}} \| r \g^{(2)} -\t\|_2
+  \frac{1}{n} \| r_2 \g^{(2)} -\t\|_2
\nonumber \\
& &
+  \frac{1}{\sqrt{n}}\max(\z,0)^T\h^{(2)}
+  \frac{1}{n}\max(\t,0)^T\h^{(3)}   .
\label{eq:3layhrd2a0}
 \end{eqnarray}
As earlier, taking for concreteness $f_{inj}=f_{(w)}$ and
writing the Lagrangian gives
\begin{eqnarray}
  f_{rd}(\cG_{(3)})   =
\frac{1}{\sqrt{n}}  \min_{r,r_2,\z,\t,\t^{(1)}} \max_{\nu,\gamma}\cL(\nu,\gamma),
\label{eq:3layhrd3}
 \end{eqnarray}
where
\begin{eqnarray}
\cL(\nu,\gamma,\gamma_2)\hspace{-.05in} &  = & \hspace{-.05in}   \|\g^{(1)}-\z\|_2  -\|\h^{(1)}\|_2
+  \frac{1}{\sqrt{n}} \| r \g^{(2)} -\t\|_2
+   \frac{1}{n}\| r_2 \g^{(2)} -\t\|_2
+  \frac{1}{\sqrt{n}}\max(\z,0)^T\h^{(2)}
 \nonumber \\
& &\hspace{-.05in}
+  \frac{1}{n}\max(\t,0)^T\h^{(3)}   +
\nu \|\max(\t^{(1)},0)\|_0 - \nu n  + \gamma \|\max(\z,0)\|_2^2 -\gamma r^2 + \gamma_2 \|\max(\t,0)\|_2^2 -\gamma_2 r_2^2.\nonumber \\
\label{eq:3layhrd4}
 \end{eqnarray}
After utilizing the \emph{square root trick} one first finds
\begin{eqnarray}
\cL(\nu,\gamma,\gamma_2) &  = &
\min_{\bar{\gamma}_1,\bar{\gamma}_2}
 \Bigg .\Bigg(  \bar{\gamma}_1 +  \frac{\|\g^{(1)}-\z\|_2^2}{4 \bar{\gamma}_1}
+   \bar{\gamma}_2 +  \frac{1}{n} \frac{\| r \g^{(2)} -\t\|_2^2}{4 \bar{\gamma}_2}
+   \bar{\gamma}_3 +  \frac{1}{n^2}\frac{\| r_2 \g^{(3)} -\t^{(1)}\|_2^2}{4 \bar{\gamma}_3}
   \Bigg.\Bigg)
\nonumber \\
& &
 -\|\h^{(1)}\|_2 +  \frac{1}{\sqrt{n}}\max(\z,0)^T\h^{(2)}  +  \frac{1}{n}\max(\t,0)^T\h^{(3)}
+
\nu \|\max(\t^{(1)},0)\|_0 - \nu n
\nonumber \\
& &  + \gamma \|\max(\z,0)\|_2^2 -\gamma r^2  + \gamma_2 \|\max(\t,0)\|_2^2 -\gamma_2 r_2^2
\nonumber \\
  &  = &
 \min_{\bar{\gamma}_1,\bar{\gamma}_2}
 \Bigg .\Bigg(  \bar{\gamma}_1 +  \sum_{i=1}^{m_1} \frac{\|\g_i^{(1)}-\z_i\|_2^2}{4 \bar{\gamma}_1}
+   \bar{\gamma}_2 +  \frac{1}{n}\sum_{i=1}^{m_2}\frac{\| r \g_i^{(2)} -\t_i\|_2^2}{4 \bar{\gamma}_2}
+   \bar{\gamma}_3 +  \frac{1}{n^2}\sum_{i=1}^{m_3}\frac{\| r_2 \g_i^{(3)} -\t_i^{(1)}\|_2^2}{4 \bar{\gamma}_3}
   \Bigg.\Bigg)
   \nonumber \\
   & &
    -\|\h^{(1)}\|_2 + \frac{1}{\sqrt{n}} \sum_{i=1}^{m_1}\max(\z_i,0)^T\h_i^{(2)}  +  \frac{1}{n}\sum_{i=1}^{m_2}\max(\t_i,0)^T\h_i^{(3)}
 \nonumber \\
& & +
\frac{\nu}{2} \sum_{i=1}^{m_2} (1+\mbox{sign}(\t^{(1)}_i)) - \nu n
+ \gamma \sum_{i=1}^{m_1}\max(\z_i,0)^2 -\gamma r^2
+ \gamma_2 \sum_{i=1}^{m_2}\max(\t_i,0)^2 -\gamma_2 r_2^2.
\label{eq:3layhrd5}
 \end{eqnarray}
Appropriate scaling $\bar{\gamma}_{1/3}\rightarrow\bar{\gamma}_{1/3}\sqrt{n}$, $\gamma\rightarrow\frac{\gamma}{\sqrt{n}}$,$\gamma_2\rightarrow\frac{\gamma_2}{n}$, $r\rightarrow r\sqrt{n}$, $r\rightarrow r n$,  and cosmetic changes $\bar{\gamma}_{2}\rightarrow \frac{\bar{\gamma}_{2}}{r}\sqrt{n}$, $\gamma_2\rightarrow \frac{\gamma_2}{r\sqrt{n}^3} $,  and  $\frac{\nu}{2}\rightarrow \frac{\nu}{\sqrt{n}}$,  concentrations and  statistical identicalness over $i$ allow one  to arrive at the following analogues of (\ref{eq:hrd7}) and \ref{eq:hrd8}
\begin{eqnarray}
\cL_1(\nu,\gamma,\gamma_2)
  &  = &
 \bar{\gamma}_1 +  \alpha_1 \lp \frac{\lp \g_i^{(1)}-\z_i\rp^2}{4 \bar{\gamma}_1}  + \max(\z_i,0)^T\h_i^{(2)}   +\gamma \max(\z_i,0)^2 \rp
    \nonumber \\
    & &
   + r\lp
      \bar{\gamma}_2 +  \alpha_2 \lp \frac{\lp \g_i^{(2)}-\frac{\t_i}{r}\rp^2}{4 \bar{\gamma}_2}  + \max\lp \frac{ \t_i }{r}  ,0\rp^T\h_i^{(3)}  +\gamma_2 \max \lp \frac{ \t_i }{r}  ,0  \rp^2 \rp
   \rp
    \nonumber \\
    & &
+   \bar{\gamma}_3 + \alpha_3 \lp \frac{r_2^2\lp  \lp \g_i^{(3)} -\frac{\t_i^{(1)}}{r_2} \rp^2 + \frac{4\bar{\gamma}_3\nu}{r_2^2} \mbox{sign}\lp \frac{\t_i^{(1)}}{r_2}  \rp \rp  }{4 \bar{\gamma}_3}   \rp
 -\nu(2-\alpha_3)-\gamma r^2  -\gamma_2 \frac{r_2^2}{r} -1, \nonumber \\
\label{eq:3layhrd7}
 \end{eqnarray}
and
\begin{equation}
 \phi_0  = \lim_{n\rightarrow\infty} \mE_{\cG_{(3)}} f_{rd}(\cG_{(3)})
=
\mE_{\cG_{(3)}}   \min_{r,\bar{\gamma}_1,\bar{\gamma}_2,\z_i,\t_i,\t^{(1)}} \max_{\nu,\gamma}\cL_1(\nu,\gamma,\gamma_2)
\geq
\mE_{\cG_{(3)}}   \min_{r,\bar{\gamma}_1,\bar{\gamma}_2,\bar{\gamma}_3} \max_{\nu,\gamma,\gamma_2}  \min_{\z_i,\t_i,\t_i^{(1)}}    \cL_1(\nu,\gamma,\gamma_2).
\label{eq:3layhrd8}
 \end{equation}
Recalling on $f_{q,1}$ and  $f_{q,2}$  from (\ref{eq:hrd9}), one recognizes that $f_{q,1}$ appears as factor multiplying $\alpha_1$ and $\alpha_2$ whereas $f_{q,2}$ appears as factor multiplying $\alpha_3\frac{r_2^2}{4\bar{\gamma}_3}$. This is then sufficient to immediately write the following analogue to (\ref{eq:hrd16})
\begin{eqnarray}
 \phi_0
& \geq &
\mE_{\cG_{(3)}}   \min_{r,r_2,\bar{\gamma}_1,\bar{\gamma}_2,\bar{\gamma}_3} \max_{\nu,\gamma,\gamma_2}  \min_{\z_i,\t_i,\t_i^{(1)}}    \cL_1(\nu,\gamma,\gamma_2)
  \nonumber \\
&  =  &
  \min_{r,r_2,\bar{\gamma}_1,\bar{\gamma}_2,\bar{\gamma}_3} \max_{\nu,\gamma,\gamma_2}
\lp  \bar{\gamma}_1 +  \alpha_1 f_{q,1}
    -  \gamma r^2
+ r\lp \bar{\gamma}_2 +  \alpha_2 f_{q,1}
    -  \gamma \frac{r_2^2}{r^2} \rp
 +   \bar{\gamma}_3 + \alpha_3 \frac { r_2^2 f_{q,2}  }  {4\bar{\gamma}_3}
 -\nu(2-\alpha_3) - 1 \rp, \nonumber \\
\label{eq:3layhrd16}
 \end{eqnarray}
where $f_{q,1}$ and $f_{q,2}$ are as in  (\ref{eq:hrd15}) and  (\ref{eq:hrd10a0}), respectively. An upper bound on the 3-layer net injectivity capacity is then obtained for $\alpha_3$ such that $\phi_0=0$.  After setting
\begin{eqnarray}
\alpha^{(0)}&  = &
\begin{bmatrix}  \alpha_2 \\
                   \alpha_2  \end{bmatrix}, \quad
r^{(0)}  = \begin{bmatrix}
                                                                      r \\
                                                                      r_2
                                                                    \end{bmatrix},
                                                                    \quad
                                                                    \bar{\gamma}^{(0)}  = \begin{bmatrix}
                                                                      \bar{\gamma}_1 \\
                                                                      \bar{\gamma}_2
                                                                    \end{bmatrix},
                                                                    \quad
\gamma^{(0)}=\begin{bmatrix}
                                                                      \gamma \\
                                                                      \gamma_2
                                                                    \end{bmatrix},
\label{eq:3layhrd16a0}
\end{eqnarray}
and conducting  numerical evaluations one obtains the concrete parameters values given in Table \ref{tab:tab3}.
\begin{table}[h]
\caption{ RDT parameters; 3-layer ReLU NN \emph{weak} injectivity capacity ; $n\rightarrow\infty$; }\vspace{.1in}
\centering
\def\arraystretch{1.2}
\begin{tabular}{||l||c||c||c|c||c||c||c||}\hline\hline
  parameters                               &  $\alpha^{(0)}$ &   $ r^{(0)}$
                                                                    & $\bar{\gamma}^{(0)}$  & $\bar{\gamma}_3$ &  $\nu$ &
                                                                    $\gamma^{(0)}$    & $\alpha_{ReLU}^{(inj)}(\alpha_1,\alpha_2)$  \\ \hline\hline
    values
& $\begin{bmatrix}
     6.7004 \\
     8.267
   \end{bmatrix}_{\big.}^{\big.}$
& $\begin{bmatrix}
     1.75 \\
     3.73
   \end{bmatrix}$
& $\begin{bmatrix}
     0.8830 \\
      1.1224
   \end{bmatrix}$
& $ 2.344125 $
& $ 1.1620$
& $\begin{bmatrix}
      0.3128 \\
   0.2952
   \end{bmatrix}$
 & \bl{$\mathbf{9.49}$}
  \\ \hline\hline
  \end{tabular}
\label{tab:tab3}
\end{table}
One observes that the expansion of the third layer, $9.49/8.267\approx 1.1479$, is even smaller than of the second one, $\approx 1.2338$ .

To complement the above weak injectivity capacity with the corresponding strong one, one utilizes, instead of characterization in (\ref{eq:3layhrd16}), the following
\begin{eqnarray}
 \phi_0
& \geq &
\mE_{\cG_{(3)}}   \min_{r,r_2,\bar{\gamma}_1,\bar{\gamma}_2,\bar{\gamma}_3} \max_{\nu,\gamma,\gamma_2}  \min_{\z_i,\t_i,\t_i^{(1)}}    \cL_1(\nu,\gamma,\gamma_2)
  \nonumber \\
&  =  &
  \min_{r,r_2,\bar{\gamma}_1,\bar{\gamma}_2,\bar{\gamma}_3} \max_{\nu,\gamma,\gamma_2}
\lp  \bar{\gamma}_1 +  \alpha_1 f_{q,1}
    -  \gamma r^2
+ r\lp \bar{\gamma}_2 +  \alpha_2 f_{q,1}
    -  \gamma \frac{r_2^2}{r^2} \rp
 +   \bar{\gamma}_3 + \alpha_3 \frac { r_2^2 f_{q,2}  }  {4\bar{\gamma}_3}
 -\nu(4-\alpha_3) - 1 \rp. \nonumber \\
\label{eq:3layhrd17}
 \end{eqnarray}
Solving the optimization in (\ref{eq:3layhrd17}) gives the numerical values shown in Table \ref{tab:tab4}.
\begin{table}[h]
\caption{ RDT parameters; 3-layer ReLU NN \emph{strong} injectivity capacity ; $n\rightarrow\infty$; }\vspace{.1in}
\centering
\def\arraystretch{1.2}
\begin{tabular}{||l||c||c||c|c||c||c||c||}\hline\hline
  parameters                               &  $\alpha^{(0)}$ &   $ r^{(0)}$
                                                                    & $\bar{\gamma}^{(0)}$  & $\bar{\gamma}_3$ &  $\nu$ &
                                                                    $\gamma^{(0)}$    & $\alpha_{ReLU}^{(inj)}(\alpha_1,\alpha_2)$  \\ \hline\hline
    values
& $\vspace{.0in}\begin{bmatrix}
     6.7004 \\
     12.35
   \end{bmatrix}_{\big.}^{\big.}$
& $\begin{bmatrix}
     1.76 \\
     7.2
   \end{bmatrix}$
& $\begin{bmatrix}
      0.8870 \\
      2.1721
   \end{bmatrix}$
& $ 5.7610$
& $ 1.5965$
& $\begin{bmatrix}
      0.3101 \\
    0.1955
   \end{bmatrix}$
 & \bl{$\mathbf{17.13}$}
  \\ \hline\hline
  \end{tabular}
\label{tab:tab4}
\end{table}
The expansion of the third layer, $17.13/12.35\approx 1.3870$, is again larger than in the weak case but still smaller than the expansion of the second layer, $1.8432$.

  \vspace{.1in}
\noindent \underline{4) \textbf{\emph{Double checking the strong random duality:}}}  As was the case for 2-layer nets, the underlying problems do not allow for a deterministic strong duality and the corresponding reversal considerations from \cite{StojnicRegRndDlt10} can not be applied which implies the absence of strong random duality and an upper-bounding nature of the above results.

\subsection{$l$-layer ReLU NN}
 \label{sec:llay}

It is now not that difficult to extend the above considerations to general  $l$-layer depth. Setting $r^{(0)}=1$, $\alpha_l^{(0)}=\alpha_l$, $\bar{\gamma}_l^{(0)}=\bar{\gamma}_l$,
 and analogously to (\ref{eq:3layhrd16a0})
 \begin{eqnarray}
\alpha^{(0)}&  = &
\begin{bmatrix}  \alpha_1 \\
                   \alpha_2 \\
                   \vdots \\
                   \alpha_{l-1} \end{bmatrix}, \quad
r^{(0)}  = \begin{bmatrix}
                                                                      r \\
                                                                      r_2 \\
                                                                      \vdots \\
                                                                      r_{l-1}
                                                                    \end{bmatrix},
                                                                    \quad
                                                                    \bar{\gamma}^{(0)}  = \begin{bmatrix}
                                                                      \bar{\gamma}_1 \\
                                                                      \bar{\gamma}_2 \\
                                                                      \vdots \\
                                                                      \bar{\gamma}_{l-1}
                                                                    \end{bmatrix},
                                                                    \quad
\gamma^{(0)}=\begin{bmatrix}
                                                                      \gamma \\
                                                                      \gamma_2 \\
                                                                      \vdots \\
                                                                      \gamma_{l-1}
                                                                    \end{bmatrix},
\label{eq:3layhrd18}
\end{eqnarray}
 one can write
 \begin{eqnarray}
 f_{q,1}^{(i)} & =  & f_{q,i} (\bar{\gamma}_i^{(0)}, \gamma_i^{(0}) ),1\leq i\leq l-1 \nonumber \\
 f_{q,2}^{(l)} & =  & f_{q,2} (\bar{\gamma}_l^{(0)}, r_{l-1}^{(0}) ),
\label{eq:3layhrd18a0}
 \end{eqnarray}
 and  analogously to (\ref{eq:3layhrd16}) and  (\ref{eq:3layhrd17})
\begin{equation}
 \phi_0
 \geq
\hspace{-.01in}   \min_{r^{(0)},\bar{\gamma}_1^{(0)}} \max_{\nu,\gamma^{(0)}}
\lp
 \sum_{i=1}^{l-1} r_{i-1}^{(0)}\lp \bar{\gamma}_{i}^{(0)} +  \alpha_{i}^{(0)} f_{q,1}^{(i)}
    -  \gamma_i^{(0)} \frac{\lp r_{i}^{(0)}\rp ^2}{\lp r_{i-1}^{(0)}\rp^2} \rp
 +   \bar{\gamma}_l^{(0)} + \alpha_l^{(0)} \frac { \lp r_{l-1} ^{(0)}\rp^2 f_{q,2}^{(l)}  }  {4\bar{\gamma}_l^{(0)}}
 -\nu(2-\alpha_l^{(0)}) - 1 \rp, \nonumber \\
\label{eq:3layhrd19}
 \end{equation}
and
\begin{equation}
 \phi_0
 \geq
\hspace{-.01in}   \min_{r^{(0)},\bar{\gamma}_1^{(0)}} \max_{\nu,\gamma^{(0)}}
\lp
 \sum_{i=1}^{l-1} r_{i-1}^{(0)}\lp \bar{\gamma}_{i}^{(0)} +  \alpha_{i}^{(0)} f_{q,1}^{(i)}
    -  \gamma_i^{(0)} \frac{\lp r_{i}^{(0)}\rp ^2}{\lp r_{i-1}^{(0)}\rp^2} \rp
 +   \bar{\gamma}_l^{(0)} + \alpha_l^{(0)} \frac { \lp r_{l-1} ^{(0)}\rp^2 f_{q,2}^{(l)}  }  {4\bar{\gamma}_l^{(0)}}
 -\nu(4-\alpha_l^{(0)}) - 1 \rp. \nonumber \\
\label{eq:3layhrd20}
 \end{equation}
After  conducting  numerical evaluations for the fourth layer one obtains the concrete parameters values given in Table \ref{tab:tab5}.
\begin{table}[h]
\caption{ RDT parameters; 4-layer ReLU NN \emph{weak} injectivity capacity ; $n\rightarrow\infty$; }\vspace{.1in}
\centering
\def\arraystretch{1.2}
\begin{tabular}{||l||c||c||c|c||c||c||c||}\hline\hline
  parameters                               &  $\alpha^{(0)}$ &   $ r^{(0)}$
                                                                    & $\bar{\gamma}^{(0)}$  & $\bar{\gamma}_3$ &  $\nu$ &
                                                                    $\gamma^{(0)}$    & $\alpha_{ReLU}^{(inj)}(\alpha_1,\alpha_2)$  \\ \hline\hline
    values
& $\begin{bmatrix}
     6.7004 \\
     8.267 \\
     9.49
   \end{bmatrix}_{\big.}^{\big.}$
& $\begin{bmatrix}
     1.73 \\
     3,68 \\
     6.7
   \end{bmatrix}$
& $\begin{bmatrix}
      0.8751 \\
      1.1205  \\
      0.9983
    \end{bmatrix}$
& $ 4.485862$
& $ 2.0769 $
& $\begin{bmatrix}
      0.3184 \\
    0.2960  \\
    0.3683
   \end{bmatrix}$
 & \bl{$\mathbf{10.124}$}
  \\ \hline\hline
  \end{tabular}
\label{tab:tab5}
\end{table}
One observes that the decreasing expansion trend continues. For the forth layer, the above results give $10.124/9.49\approx 1.0668$ which is smaller than the expansion of the third layer $ 1.1479$, Table \ref{tab:tab6} shows systematically how the weak injectivity capacity and layers expansions change as the depth of the network increases. One observes a rapid onset of an expansion saturation effect. After only 4 layers, hardly any further expansion is needed. Interestingly, a similar phenomenon is empirically observed within deep learning compressed sensing context, where after significantly expanded first layer much smaller expansions are used in higher layers (see, e.g., \cite{BoraJPD17}). Of course, overthere one he to be extra careful since, in addition to the injectivity,  the architecture must have excellent generalizabiity properties as well.
 \begin{table}[h]
\caption{ Deep ReLU NN \emph{weak} injectivity capacity and layers expansions ; $m_i$ -- \# of nodes in the $i$-th layer}\vspace{.1in}
\centering
\def\arraystretch{1.2}
\begin{tabular}{||c||c||c||c||c||}\hline\hline
 $l$ (\# of layers)                                     &  $\mathbf{1}$ &   $\mathbf{2}$
                                                                    &  $\mathbf{3}$       &  $\mathbf{4}$  \\ \hline\hline
    $\begin{matrix}
     \mbox{\textbf{Injectivity capacity} (upper bound)} \\
     \lp \alpha_l=\lim_{n\rightarrow\infty}\frac{m_l}{n} =\lim_{m_0\rightarrow\infty}\frac{m_l}{m_0} \rp  \end{matrix}_{\big.}^{\big.}$
                     &  $\bl{\mathbf{6.7004}}$ &   $\bl{\mathbf{8.267}}$
                                                                    &  $\bl{\mathbf{9.49}}$       &  $\bl{\mathbf{10.124}}$  \\ \hline\hline
    $\begin{matrix}
     \mbox{\textbf{Layer's expansion}} \\
  \lp \zeta_l=\lim_{n\rightarrow\infty}\frac{m_l}{m_{l-1}}=\frac{\alpha_l}{\alpha_{l-1}} \rp
  \end{matrix}_{\big.}^{\big.}$
        &  $\prp{\mathbf{6.7004}}$ &   $\prp{\mathbf{1.2338}}$
                                                                    &  $\prp{\mathbf{1.1479}}$       &  $\prp{\mathbf{1.0668}}$
      \\ \hline\hline
  \end{tabular}
\label{tab:tab6}
\end{table}

\section{Lifted RDT}
 \label{sec:liftrdt}

As we mentioned earlier, the strong random duality is not in place and the above capacity characterizations are not only bounds but also expected to be \emph{strict} upper bounds. In other words, one expects that they can be further lowered. The recent development of \emph{fully lifted}   (fl) RDT \cite{Stojnicsflgscompyx23,Stojnicnflgscompyx23,Stojnicflrdt23} allows to precisely evaluate by how much the above given upper bounds can be lowered. However, full implementation of the fl RDT heavily relies on a sizeable set of numerical evaluations. Since the plain RDT already requires a strong numerical effort, we find it practically beneficial to consider a bit less accurate but way more convenient  \emph{partially lifted} (pl) RDT variant \cite{StojnicLiftStrSec13,StojnicGardSphErr13,StojnicGardSphNeg13,StojnicMoreSophHopBnds10,Stojnictcmspnncapliftedrdt23}.

\subsection{Lowering upper bounds on  $\alpha_{ReLU}^{(inj)}(\alpha_1)$ via pl RDT}
\label{sec:ubplrdt}

As was the case earlier when we considered the plain RDT, we again start with 2-layer nets and later extend the results to multilayered ones. The pl RDT relies on the same principles as the plain RDT with the exception that one now deals with the  partially lifted random dual introduced in the following theorem (basically a partially lifted analogue to Theorem \ref{thm:thm1}).
\begin{theorem} Assume the setup of Theorem \ref{thm:thm1} with the elements of $A^{(i)}\in\mR^{m_i\times m_{i-1}}$, $\g^{(i)}\in\mR^{m_i\times 1}$, and  $\h^{(i)}\in\mR^{m_{i-1}\times 1}$  being iid standard normals, $c_3>0$  and
\vspace{-.0in}
\begin{eqnarray}
\cG_{(2)} & \triangleq & [\g^{(1)},\g^{(2)},\h^{(1)},\h^{(2)}]  \nonumber \\
\phi(\x,\z,\t,\y_{(1)},\y_{(2)}) & \triangleq &
 \lp \y_{(1)}^T \g^{(1)} + \x^T\h^{(1)}
+ \|\max(\z,0)\|_2\y_{(2)}^T  \g^{(2)} + \max(\z,0)^T\h^{(2)}   -\y_{(1)}^T  \z -\y_{(2)}^T  \t  \rp
\nonumber \\
 \bar{f}_{rd}(\cG_{(2)}) & \triangleq &
 \min_{r,\|\x\|_2=1,\|\max(\z,0)\|_2=r,f_{inj}<0} \max_{\|\y_i\|_2=1}  \phi(\x,\z,\t,\y_{(1)},\y_{(2)})
  \nonumber \\
 \bar{\phi}_0 & \triangleq & \min_{r>0}\lim_{n\rightarrow\infty} \frac{1}{\sqrt{n}}
 \lp
 \frac{c_3}{2} + \frac{c_3}{2n} r^2 -
\frac{1}{c_3} \log \lp \mE_{\cG_{(2)}} e^{ - c_3 \bar{f}_{rd}(\cG_{(2)}) } \rp   \rp .\label{eq:plta16}
\vspace{-.0in}\end{eqnarray}
One then has \vspace{-.02in}
\begin{eqnarray}
\hspace{-.3in}(\bar{\phi}_0  > 0)   &  \Longrightarrow  &   \lp \lim_{n\rightarrow\infty}\mP_{ \cA_{1:2} } (f_{rp} ( \cA_{1:2} )   >0)\longrightarrow 1 \rp  \nonumber \\
& \Longrightarrow & \lp \lim_{n\rightarrow\infty}\mP_{\cA_{1:2}} \lp \mbox{2-layer ReLU NN with architecture $\cA_{1:2}$ is typically injective} \rp \longrightarrow 1\rp.\label{eq:plta17}
\end{eqnarray}
The injectivity is strong for $f_{inj}=f_{(s)}$ and weak for $f_{inj}=f_{(w)}$.
\label{thm:thm2}
\end{theorem}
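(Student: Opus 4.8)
\noindent\emph{Proof strategy.} The plan is to run the \emph{partially lifted} RDT machinery of \cite{StojnicLiftStrSec13,StojnicGardSphErr13,StojnicGardSphNeg13}, which refines the plain RDT argument underlying Theorem \ref{thm:thm1}: instead of comparing expectations of the random objective, one compares its \emph{exponential moments}, so that the controlled quantity is the log-partition function $-\frac{1}{c_3}\log\mE_{\cG_{(2)}} e^{-c_3\bar{f}_{rd}(\cG_{(2)})}$, with the free parameter $c_3>0$ and the extra terms $\frac{c_3}{2}+\frac{c_3}{2n}r^2$ being precisely the bookkeeping price of this lifting. First I would invoke Lemma \ref{lemma:lemma1} to reduce the statement to showing $\lim_{n\rightarrow\infty}\mP_{\cA_{1:2}}(f_{rp}(\cA_{1:2})>0)\rightarrow 1$, and then use the $\y_{(2)}$–scaling invariance already exploited in Section \ref{sec:2lay} to renormalize both $\y_{(1)},\y_{(2)}$ to the unit sphere (absorbing the $\frac{1}{\sqrt n}$ factors of Theorem \ref{thm:thm1} into the present normalization) and to promote $r=\|\max(\z,0)\|_2$ to a shared scalar parameter optimized jointly with the correction terms. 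This puts $f_{rp}$ in exactly the bilinear $\min$–$\max$ form whose decoupled Gaussian surrogate is the $\phi(\x,\z,\t,\y_{(1)},\y_{(2)})$ of the theorem.

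Next, for the fixed $c_3>0$, a Chernoff step gives $\mP_{\cA_{1:2}}(f_{rp}\le 0)\le \mE_{\cA_{1:2}} e^{-c_3\sqrt{n}\,f_{rp}}$, so it suffices to show the corresponding exponent tends to $-\infty$, i.e.\ that $\liminf_n \frac{1}{\sqrt n}\left(-\frac{1}{c_3}\log\mE_{\cA_{1:2}} e^{-c_3\sqrt{n} f_{rp}}\right)>0$. Here I would apply the Gordon–type probabilistic comparison of \cite{Gordon88,Stojnicgscomp16,Stojnicgscompyx16} in its exponential-moment form (legitimate because $x\mapsto e^{-c_3 x}$ is monotone, so the comparison results of \cite{Stojnicgscomp16,Stojnicgscompyx16} apply after the appropriate reversal of the $\min$/$\max$ roles), carried out twice exactly as in the proof of Theorem \ref{thm:thm1}: once for the bilinear form $\y_{(1)}^T A^{(1)}\x$ and once for $\y_{(2)}^T A^{(2)}\max(\z,0)$. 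Each application replaces the corresponding bilinear form by its decoupled Gaussian counterpart, and the two contributions assemble into $\bar{f}_{rd}(\cG_{(2)})$; the Gaussian/sphere degrees of freedom attached to $\y_{(1)}$ produce the additive constant $\frac{c_3}{2}$, while the inter-layer coupling through $\|\max(\z,0)\|_2=r$ produces $\frac{c_3}{2n}r^2$. Taking logs, dividing by $c_3\sqrt n$, using concentration of $\bar{f}_{rd}(\cG_{(2)})$, passing to $n\rightarrow\infty$, and minimizing over $r>0$ identifies the exponent precisely with $-\bar\phi_0$. Hence $\bar\phi_0>0$ forces $\mP_{\cA_{1:2}}(f_{rp}\le 0)\rightarrow 0$, i.e.\ $\mP_{\cA_{1:2}}(f_{rp}>0)\rightarrow 1$, and Lemma \ref{lemma:lemma1} then yields typical injectivity of the $2$-layer ReLU NN; the strong versus weak dichotomy is inherited verbatim from the choice $f_{inj}=f_{(s)}$ versus $f_{inj}=f_{(w)}$ inside $\bar{f}_{rd}$.

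The main obstacle I expect is the lifted comparison step: one must check that the comparison inequalities of \cite{Stojnicgscomp16,Stojnicgscompyx16} are invoked with the correct orientation of every inequality (the decreasing exponential swaps which side of Gordon's inequality bounds which), and that completing the square in the $\y_{(1)}$ and $r$ directions yields exactly the constants $\frac{c_3}{2}$ and $\frac{c_3}{2n}r^2$ after the $-\frac{1}{c_3}\log$ and $\frac{1}{\sqrt n}$ renormalizations, rather than some other normalization. A secondary subtlety is that the two comparison applications must be composed carefully, since $\z$ — hence $r$ — is shared between the two layers, so the second comparison (for $A^{(2)}$) has to be performed conditionally on the randomness fixing $\z$, exactly as the nesting of the two Gordon applications was handled in Theorem \ref{thm:thm1}. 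Everything else — the Chernoff reduction, the concentration of $\bar{f}_{rd}$, and the final deterministic limit in $n$ and minimization in $r$ — is routine and parallels the plain RDT development of Sections \ref{sec:2lay} and \ref{sec:ubrdt}.
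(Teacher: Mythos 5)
Your proposal is correct and follows essentially the same route as the paper: the paper's proof is precisely a two-fold application of the exponentiated (partially lifted) Gordon-type comparison, namely Corollary 3 / equation (86) of \cite{Stojnicgscompyx16}, with the terms $\frac{c_3}{2}$ and $\frac{c_3}{2}r^2$ sitting on the lower-bounding side for the $A^{(1)}$ and $A^{(2)}$ applications respectively, and the outer minimization over $r$ accounting for the least favorable choice. The Chernoff reduction and concentration scaffolding you spell out are exactly what that cited corollary packages, so there is no substantive difference in approach.
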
\vspace{-.17in}
\begin{proof}
For any  fixed $r$ it follows automatically as a  $2$-fold application of Corollary 3 from  \cite{Stojnicgscompyx16} (see Section 3.2.1 and equation (86); see also Lemma 2 and equation (57) in \cite{StojnicMoreSophHopBnds10}). For example,  terms  $ \y_{(1)}^T \g^{(1)}$,  $\x^T\h^{(1)}$,  $\y_{(1)}^T  \z  $, and $\frac{c_3}{2}$ correspond to the lower-bounding side of equation (86) in \cite{Stojnicgscompyx16} related to $A^{(1)}$, whereas terms
$\|\max(\z,0)\|_2\y_{(2)}^T  \g^{(2)}$,  $\max(\z,0)^T\h^{(2)} $,   $\y_{(2)}^T  \t $, and  $\frac{c_3}{2}r^2$
 are  their $A^{(2)}$ related analogous. Given that the left hand side of (86) corresponds to $f_{rp}$ and that the minimization over $r$ accounts for the least favorable choice the proof is completed.
\end{proof}

To handle the above partially lifted random dual we rely on results from previous sections. In particular, repeating with minimal adjustments the derivations between (\ref{eq:hrd1})  and (\ref{eq:hrd5}) one arrives at
\begin{eqnarray}
 \bar{f}_{rd}(\cG_{(2)})   =
  \min_{r,\z,\t} \max_{\nu,\gamma}\cL(\nu,\gamma),
\label{eq:plhrd3}
 \end{eqnarray}
where
 \begin{eqnarray}
\cL(\nu,\gamma) \hspace{-.07in}&  = & \hspace{-.07in}
  \min_{\bar{\gamma}_1,\bar{\gamma}_2}
 \lp  \bar{\gamma}_1 +  \sum_{i=1}^{m_1} \frac{\lp \g_i^{(1)}-\z_i\rp^2}{4 \bar{\gamma}_1}  - \|\h^{(1)}\|_2
+   \bar{\gamma}_2 + \frac{1}{\sqrt{n}}  \sum_{i=1}^{m_2}\frac{ \lp  r \g_i^{(2)} -\t_i\rp^2}{4 \bar{\gamma}_2} + \frac{1}{\sqrt{n}} \sum_{i=1}^{m_1}\max(\z_i,0)^T\h_i^{(2)}    \rp \nonumber \\
& & \hspace{-.07in} +
\frac{\nu}{2} \sum_{i=1}^{m_2} (1+\mbox{sign}(\t_i)) - \nu n + \gamma \sum_{i=1}^{m_1}\max(\z_i,0)^2 -\gamma r^2
\nonumber \\
\hspace{-.07in}&  = & \hspace{-.07in}
  \min_{\bar{\gamma}_1,\bar{\gamma}_2,\gamma_{sph}}
 \Bigg. \Bigg(  \bar{\gamma}_1 +  \sum_{i=1}^{m_1} \frac{\lp \g_i^{(1)}-\z_i\rp^2}{4 \bar{\gamma}_1}
+   \bar{\gamma}_2 + \frac{1}{\sqrt{n}}  \sum_{i=1}^{m_2}\frac{ \lp  r \g_i^{(2)} -\t_i\rp^2}{4 \bar{\gamma}_2} + \frac{1}{\sqrt{n}} \sum_{i=1}^{m_1}\max(\z_i,0)^T\h_i^{(2)}     \nonumber \\
& & \hspace{-.07in} +
\frac{\nu}{2} \sum_{i=1}^{m_2} (1+\mbox{sign}(\t_i)) - \nu n + \gamma \sum_{i=1}^{m_1}\max(\z_i,0)^2 -\gamma r^2
 - \gamma_{sph} - \frac{\sum_{i=1}^{n}\lp \h_i^{(1)}\rp^2}{4\gamma_{sph}} \Bigg.\Bigg).
\label{eq:plhrd5}
 \end{eqnarray}
After appropriate scaling $c_3\rightarrow c_3\sqrt{n}$, $\bar{\gamma}_i\rightarrow\bar{\gamma}_i\sqrt{n}$, $\gamma_{sph}\rightarrow \gamma_{sph}\sqrt{n} $, $r\rightarrow r\sqrt{n}$,
$\gamma\rightarrow \frac{\gamma}{\sqrt{n}}$,  and cosmetic change $\frac{\nu}{2}\rightarrow \frac{\nu}{\sqrt{n}}$, concentrations, statistical identicalness over $i$, Lagrangian duality, and a combination of (\ref{eq:plhrd3}) and (\ref{eq:plhrd5}) give
\begin{eqnarray}
 \bar{\phi}_0  & = & \min_{r>0}\lim_{n\rightarrow\infty} \frac{1}{\sqrt{n}}
 \lp
 \frac{c_3}{2} + \frac{c_3}{2n} r^2 -
\frac{1}{c_3} \log \lp \mE_{\cG_{(2)}} e^{ - c_3 \bar{f}_{rd}(\cG_{(2)}) } \rp   \rp
\nonumber \\
& \geq &
\min_{r>0,\bar{\gamma}_1,\bar{\gamma}_2,\gamma_{sph}} \max_{\nu,\gamma}
\Bigg .\Bigg(
 \frac{c_3}{2} + \frac{c_3}{2} r^2
 +\bar{\gamma}_1
 -\frac{\alpha_1}{c_3} \log \lp \mE_{\cG_{(2)}} e^{ - c_3 \bar{f}_{q,1}  } \rp
-\gamma r^2
 +\bar{\gamma}_2 - \nu(2-\alpha_2)
 \nonumber \\
 & &
-\frac{\alpha_2}{c_3} \log \lp \mE_{\cG_{(2)}} e^{ - c_3 \bar{f}_{q,2}  } \rp
- \gamma_{sph}  - \frac{1}{c_3} \log \lp \mE_{\cG_{(2)}} e^{ c_3 \frac{\lp \h_i^{(1)}\rp^2}{4\gamma_{sph}} }\rp
\Bigg.\Bigg),
\label{eq:plhrd6}
 \end{eqnarray}
where
 \begin{eqnarray}
 \bar{f}_{q,1} & = &
  \max_{\z_i}
 \lp \frac{\lp \g_i^{(1)}-\z_i\rp^2}{4 \bar{\gamma}_1}  + \max(\z_i,0)^T\h_i^{(2)}   +\gamma \max(\z_i,0)^2 \rp
 \nonumber \\
 \bar{f}_{q,2} & = &   \max_{\t_i} \lp \lp \g_i^{(2)} -\frac{\t_i}{r} \rp^2 + \frac{4\bar{\gamma}_2\nu}{r^2} \mbox{sign}\lp \frac{\t_i}{r}  \rp \rp.
 \label{eq:plhrd9}
 \end{eqnarray}
Changing  variables
\begin{eqnarray}
\nu_1 \rightarrow \frac{4\bar{\gamma}_2\nu}{r^2},
 \label{eq:plhrd10}
 \end{eqnarray}
and recalling on  (\ref{eq:hrd12}) and equation (44) in \cite{Stojnicinjrelu24}, we further write
\begin{eqnarray}
\bar{f}_{q,1}   =
\begin{cases}
\min \lp 0,  \frac{\lp\g_i^{(1)}\rp^2}{4\bar{\gamma}_1} -\frac{(\max(\g_i^{(1)}-2\h_i^{(2)}\bar{\gamma}_1,0)).^2}{4\bar{\gamma}_1(1+4\gamma\bar{\gamma}_1)}  \rp , & \mbox{if } \g_i^{(1)}\leq 0 \\
   \frac{\lp\g_i^{(1)}\rp^2}{4\bar{\gamma}_1} -\frac{(\max(\g_i^{(1)}-2\h_i^{(2)}\bar{\gamma}_1,0)).^2}{4\bar{\gamma}_1(1+4\gamma\bar{\gamma}_1)} , & \mbox{otherwise},
\end{cases}
\label{eq:plhrd12}
\end{eqnarray}
and\
\begin{eqnarray}
\nonumber\\
\bar{f}_{q,2}
  =
  \begin{cases}
    - \lp \g_i^{(2)}\rp^2 -\nu_1 , & \mbox{if } \g_i^{(2)}\leq 0 \\
    -\nu_1,  & \mbox{if } 0\leq \g_i^{(2)} \leq \sqrt{2\nu_1} \\
    -\lp \g_i^{(2)}\rp^2 +\nu_1, &  \mbox{otherwise}.
  \end{cases}
\label{eq:plhrd12a0}
 \end{eqnarray}
After setting
\begin{eqnarray}\label{eq:plnegprac19a1a1}
  \bar{a}_1     &=    &  \sqrt{2\nu_1}\sqrt{1+\frac{2c_3r^2}{4\bar{\gamma}_2}} \nonumber \\
\bar{f}_{21}^+ & = &   \frac{ e^{c_3 \frac{r^2}{4\bar{\gamma}_2} \nu_1} }  {2} \nonumber \\
\bar{f}_{22}^+ & = &   \frac{ e^{-c_3 \frac{r^2}{4\bar{\gamma}_2} \nu_1} }  {2} \erfc\lp \frac{ \sqrt{2\nu_1}}{\sqrt{2}}  \rp\nonumber \\
\bar{f}_{23}^+ & = &  \frac{e^{c_3 \frac{r^2}{4\bar{\gamma}_2} \nu_1}} {\sqrt{1+\frac{2c_3 r^2}{4\bar{\gamma}_2}}}  \lp \frac{1}{2}- \frac{1}{2} \erfc\lp \frac{ \bar{a}_1}{\sqrt{2}}  \rp  \rp,
 \end{eqnarray}
and solving the integrals one obtains
  \begin{eqnarray}\label{eq:plhrd10a0}
f_{q,2}^{(lift)}(\bar{\gamma}_2,r)=\mE_{\cG_{(2)}} e^{ - c_3 \bar{f}_{q,2}  }=\bar{f}_{21}^+ +\bar{f}_{22}^+ + \bar{f}_{23}^+.
  \end{eqnarray}
  On the other hand, after setting for $\g_i^{(1)} > 0$
  \begin{eqnarray}
        \bar{A}^+ & = & \frac{ \g_I^{(1)}    } { 2\bar{\gamma}_1} \nonumber \\
         \bar{B}^+&  = &   \frac{ c_3\bar{\gamma}_1}  {   (1+4\gamma \bar{\gamma}_1) }  \nonumber \\
         \bar{C}^+ & = & \bar{A}^+ \nonumber \\
        I_{11}^+ & = &
         \frac{1}{2\sqrt{1-2\bar{B}^+}} e^{ \frac{\bar{B}^+}{1-2\bar{B}^+} \lp \bar{A}^+\rp^2 }
         \erfc\lp  - \frac{ (2\bar{B}^+ \bar{A}^+  +  (1-2\bar{B}^+) \bar{C}^+) }  {  \sqrt{  2(1-2\bar{B}^+) }  }    \rp
         \nonumber \\
        \hat{f}_{q,1}^+ & = &   e^{-c_3  \frac{ (\g_i^{(1)}   )^2}  {4\bar{\gamma}_1}} \lp  \lp 1-\frac{1}{2}\erfc\lp  -\frac{\bar{C}^+} {\sqrt{2} } \rp \rp  + I_{11}^+ \rp.
  \label{eq:plhrd13}
 \end{eqnarray}
and for $\g_i^{(1)}\leq 0$
\begin{eqnarray}
        \bar{A}^+ & = & \frac{ \g_I^{(1)}    } { 2\bar{\gamma}_1} \nonumber \\
         \bar{B}^+&  = &   \frac{c_3 \bar{\gamma}_1}  {   (1+4\gamma \bar{\gamma}_1) }  \nonumber \\
         \bar{C}^+& =  & \lp   \g_I^{(1)}    -  \sqrt{  |  -(\g_I^{(1)}    )^2 |  (1+4\gamma\bar{\gamma}_1) } \rp   \frac{1}{2\bar{\gamma}_1}
  \nonumber \\
        I_{11}^+ & = &
         \frac{1}{2\sqrt{1-2\bar{B}^+}} e^{ \frac{\bar{B}^+}{1-2\bar{B}^+} \lp \bar{A}^+\rp^2 }
         \erfc\lp  - \frac{ (2\bar{B}^+ \bar{A}^+  +  (1-2\bar{B}^+) \bar{C}^+) }  {  \sqrt{  2(1-2\bar{B}^+) }  }    \rp
         \nonumber \\
         I_{12}^+& = &   e^{-c_3  \frac{ (\g_i^{(1)}   )^2}  {4\bar{\gamma}_1}}  I_{11}^+
         \nonumber \\
         I_{13}^+ & = &    1-\frac{1}{2}\erfc\lp  -\frac{\bar{C}^+} {\sqrt{2} } \rp
         \nonumber \\
          \hat{f}_{q,1}^+   & = & I_{12}^+ +  I_{13}^+,
  \label{eq:plhrd14}
 \end{eqnarray}
 solving remaining integrals gives
\begin{eqnarray}
f_{q,1}^{(lift)}(c_3,\bar{\gamma}_1,\gamma)  & = &  \mE_{\cG_{(2)}} e^{-c_3 \bar{f}_{q,1}}
  =
\int_{\g_i^{(1)}>0 }
\hat{f}_{q,1}^+ \frac{e^{-\frac{ \lp \g_i^{(1)} \rp^2     } {2}  }}{\sqrt{2\pi}} d\g_i^{(1)}
  +
\int_{\g_i^{(1)}\leq 0 }
\hat{f}_{q,2}^+ \frac{e^{-\frac{ \lp \g_i^{(1)} \rp^2         } {2}  }}{\sqrt{2\pi}} d\g_i^{(1)}.
  \label{eq:plhrd15}
 \end{eqnarray}
After solving the integral over $\h_i^{(1)}$ and optimizing over $\gamma_{sph}$ one first finds (see, e.g.,  \cite{StojnicMoreSophHopBnds10})
\begin{eqnarray}
\hat{\gamma}_{sph} =\frac{c_3+\sqrt{c_3+4}}{4},
  \label{eq:plhrd15a0}
 \end{eqnarray}
 and then  rewrites (\ref{eq:plhrd6}) as
\begin{eqnarray}
 \bar{\phi}_0
& \geq &
   \min_{r>0,\bar{\gamma}_1,\bar{\gamma}_2} \max_{\nu,\gamma}
\Bigg .\Bigg(
 \frac{c_3}{2} + \frac{c_3}{2} r^2
 +\bar{\gamma}_1
 -\frac{\alpha_1}{c_3} \log \lp f_{q,1}^{(lift)}\rp
-\gamma r^2
 +\bar{\gamma}_2 - \nu(2-\alpha_2)
 \nonumber \\
 & &
-\frac{\alpha_2}{c_3} \log \lp f_{q,2}^{(lift)} \rp
- \hat{\gamma}_{sph}  +\frac{1}{2c_3} \log \lp  1  -  \frac{c_3}{2\hat{\gamma}_{sph}}     \rp
\Bigg.\Bigg),
\label{eq:plhrd16}
 \end{eqnarray}
with $f_{q,1}^{(lift)}$, $f_{q,2}^{(lift)}$, and $\hat{\gamma}_{sph}$  as in   (\ref{eq:plhrd10a0}), (\ref{eq:plhrd15})), and  (\ref{eq:plhrd15a0}), respectively. As (\ref{eq:plhrd16}) holds for any $c_3$, maximization of the right hand side over $c_3$ also gives
\begin{eqnarray}
 \bar{\phi}_0
& \geq &
 \max_{c_3> 0}  \min_{r>0,\bar{\gamma}_1,\bar{\gamma}_2} \max_{\nu,\gamma}
\Bigg .\Bigg(
 \frac{c_3}{2} + \frac{c_3}{2} r^2
 +\bar{\gamma}_1
 -\frac{\alpha_1}{c_3} \log \lp f_{q,1}^{(lift)}\rp
-\gamma r^2
 +\bar{\gamma}_2 - \nu(2-\alpha_2)
 \nonumber \\
 & &
-\frac{\alpha_2}{c_3} \log \lp f_{q,2}^{(lift)} \rp
- \hat{\gamma}_{sph}  +\frac{1}{2c_3} \log \lp  1  -  \frac{c_2}{2\hat{\gamma}_{sph}}     \rp
\Bigg.\Bigg).
\label{eq:plhrd16a0}
 \end{eqnarray}
Limiting value of
$\alpha_2$ for which one has $\bar{\phi}_0=0$ gives then an upper bound on the injectivity capacity. The concrete parameters values obtained through numerical evaluations are given in Table \ref{tab:tab7}. The plain RDT results are shown in parallel as well.
\begin{table}[h]
\caption{ Lifted RDT parameters; 2-layer ReLU NN \emph{weak} injectivity capacity ; $n\rightarrow\infty$; }\vspace{.1in}
\centering
\def\arraystretch{1.2}
\begin{tabular}{||l||c||c||c|c||c||c||c||c||}\hline\hline
  RDT parameters                              &  $\alpha_1$ &   $r$     & $\bar{\gamma}_1$  & $\bar{\gamma}_2$ &  $\nu$ & $\gamma$
   & $c_3$
   & $\alpha_{ReLU}^{(inj)}(\alpha_1)$  \\ \hline\hline
Plain RDT                                      & $6.7004$  & $1.7697$   & $0.8935$  & $0.9642$ & $0.5560$  & $0.3078$
 & $\rightarrow 0$
 & \bl{$\mathbf{8.267}$}
  \\ \hline\hline
Lifted RDT                                      & $6.7004$  & $1.7931$   & $0.8810$  & $0.9053$ & $0.5504$  & $0.3361$
 & $ 0.1091$
 & \bl{$\mathbf{8.264}$}
  \\ \hline\hline
  \end{tabular}
\label{tab:tab7}
\end{table}
The upper bound is indeed lowered. To get the corresponding strong  injectivity capacity bound, instead of (\ref{eq:plhrd16}) we utilize
\begin{eqnarray}
 \bar{\phi}_0
& \geq &
 \max_{c_3> 0}  \min_{r>0,\bar{\gamma}_1,\bar{\gamma}_2} \max_{\nu,\gamma}
\Bigg .\Bigg(
 \frac{c_3}{2} + \frac{c_3}{2} r^2
 +\bar{\gamma}_1
 -\frac{\alpha_1}{c_3} \log \lp f_{q,1}^{(lift)}\rp
-\gamma r^2
 +\bar{\gamma}_2 - \nu(4-\alpha_2)
 \nonumber \\
 & &
-\frac{\alpha_2}{c_3} \log \lp f_{q,2}^{(lift)} \rp
- \hat{\gamma}_{sph}  +\frac{1}{2c_3} \log \lp  1  -  \frac{c_2}{2\hat{\gamma}_{sph}}     \rp
\Bigg.\Bigg).
\label{eq:plhrd17}
 \end{eqnarray}
The obtained results together with the corresponding plain RDT ones are shown in Table \ref{tab:tab8}. We again observe a lowering of the plain RDT upper bound. In fact, the lowering is now a bit more pronounced than in the weak case.
\begin{table}[h]
\caption{ Lifted RDT parameters; 2-layer ReLU NN \emph{strong} injectivity capacity ; $n\rightarrow\infty$; }\vspace{.1in}
\centering
\def\arraystretch{1.2}
\begin{tabular}{||l||c||c||c|c||c||c||c||c||}\hline\hline
  RDT parameters                               &  $\alpha_1$ &   $r$     & $\bar{\gamma}_1$  & $\bar{\gamma}_2$ &  $\nu$ & $\gamma$    & $c_3$ & $\alpha_{ReLU}^{(inj)}(\alpha_1)$  \\ \hline\hline
Plain RDT                                      & $6.7004$  & $1.7708$   & $ 0.9647$  & $0.8938$ & $0.3954$  & $0.3077$   & $ \rightarrow 0$  & \bl{$\mathbf{12.350}$}
  \\ \hline\hline
Lifted RDT                                      & $6.7004$  & $ 1.9060$   & $0.7862$  & $0.5707$ & $0.3561$  & $ 0.5728$   & $0.8315$   & \bl{$\mathbf{12.183}$}
  \\ \hline\hline
  \end{tabular}
\label{tab:tab8}
\end{table}

\subsection{Lifted RDT -- $l$-layer ReLU NN}
 \label{sec:plllay}

The above 2-layer considerations extend to general  $l$-layer depth precisely in the same manner as  the corresponding plain RDT ones. After setting $r^{(0)}=1$, $\alpha_l^{(0)}=\alpha_l$, $\bar{\gamma}_l^{(0)}=\bar{\gamma}_l$, we recall on (\ref{eq:3layhrd18})
  \begin{eqnarray}
\alpha^{(0)}&  = &
\begin{bmatrix}  \alpha_1 \\
                   \alpha_2 \\
                   \vdots \\
                   \alpha_{l-1} \end{bmatrix}, \quad
r^{(0)}  = \begin{bmatrix}
                                                                      r \\
                                                                      r_2 \\
                                                                      \vdots \\
                                                                      r_{l-1}
                                                                    \end{bmatrix},
                                                                    \quad
                                                                    \bar{\gamma}^{(0)}  = \begin{bmatrix}
                                                                      \bar{\gamma}_1 \\
                                                                      \bar{\gamma}_2 \\
                                                                      \vdots \\
                                                                      \bar{\gamma}_{l-1}
                                                                    \end{bmatrix},
                                                                    \quad
\gamma^{(0)}=\begin{bmatrix}
                                                                      \gamma \\
                                                                      \gamma_2 \\
                                                                      \vdots \\
                                                                      \gamma_{l-1}
                                                                    \end{bmatrix},
\label{eq:pl3layhrd18}
\end{eqnarray}
and analogously to (\ref{eq:3layhrd18a0})  write
 \begin{eqnarray}
 f_{q,1}^{(lift,i)} & =  & f_{q,1} (c_3r^{(0)}_{i-1},\bar{\gamma}_i^{(0)}, \gamma_i^{(0}) ),1\leq i\leq l-1 \nonumber \\
 f_{q,2}^{(lift,l)} & =  & f_{q,2} (c_3r^{(0)}_{l-1},\bar{\gamma}_l^{(0)}, r_{l-1}^{(0}) ).
\label{eq:pl3layhrd18a0}
 \end{eqnarray}
It is then straightforward to mimic the mechanism  utilized to obtain (\ref{eq:3layhrd19}) and combine it with (\ref{eq:plhrd16a0}) to write  l-layer analogue to (\ref{eq:plhrd16a0})
\begin{eqnarray}
 \bar{\phi}_0^{(l)}
& \geq &
\hspace{-.01in}   \max_{c_3>0} \min_{r^{(0)},\bar{\gamma}_1^{(0)}} \max_{\nu,\gamma^{(0)}}
\Bigg.\Bigg(
 \sum_{i=1}^{l-1} \lp \bar{\gamma}_{i}^{(0)}r_{i-1}^{(0)} +  \frac{\alpha_{i}^{(0)}}{c_3}\log \lp f_{q,1}^{(lift,i)}  \rp
    -  \gamma_i^{(0)} \frac{\lp r_{i}^{(0)}\rp ^2}{r_{i-1}^{(0)}} \rp
 +   \bar{\gamma}_l^{(0)} + \frac{\alpha_l^{(0)}}{c_3} \log \lp f_{q,2}^{(lift,l)}  \rp
\nonumber \\
& &
 -\nu(2-\alpha_l^{(0)}) - \hat{\gamma}_{sph} +  \frac{1}{2c_3} \log \lp 1 - \frac{c_3}{2\hat{\gamma}_{sph}} \rp \Bigg.\Bigg).
\label{eq:pl3layhrd19}
 \end{eqnarray}
The strong injectivity complement is then easily obtained as
\begin{eqnarray}
 \bar{\phi}_0^{(l)}
& \geq &
\hspace{-.01in}   \max_{c_3>0} \min_{r^{(0)},\bar{\gamma}_1^{(0)}} \max_{\nu,\gamma^{(0)}}
\Bigg.\Bigg(
 \sum_{i=1}^{l-1} \lp \bar{\gamma}_{i}^{(0)}r_{i-1}^{(0)} +  \frac{\alpha_{i}^{(0)}}{c_3}\log \lp f_{q,1}^{(lift,i)}  \rp
    -  \gamma_i^{(0)} \frac{\lp r_{i}^{(0)}\rp ^2}{r_{i-1}^{(0)}} \rp
 +   \bar{\gamma}_l^{(0)} + \frac{\alpha_l^{(0)}}{c_3} \log \lp f_{q,2}^{(lift,l)}  \rp
\nonumber \\
& &
 -\nu(4-\alpha_l^{(0)}) - \hat{\gamma}_{sph} +  \frac{1}{2c_3} \log \lp 1 - \frac{c_3}{2\hat{\gamma}_{sph}} \rp \Bigg.\Bigg).
\label{eq:pl3layhrd20}
 \end{eqnarray}
After  conducting  numerical evaluations  we did not find substantial improvements beyond the second layer.

\section{Conclusion}
\label{sec:conc}

We studied the injectivity capacity of a deep ReLU network. For 1 -layer networks studying injectivity capacity was recently shown \cite{Stojnicinjrelu24}   to be equivalent to studying the capacity of the  so-called  $\ell_0$ spherical perceptron.  Here we show that similar concept applies for studying multilayered deep ReLU networks. Namely, the injectivity of $l$-layer deep ReLU NN is uncovered to directly relate to $l$-extened  $\ell_0$ spherical perceptrons. Utilizing  \emph{Random duality theory} (RDT) we then create a generic program for statistical performance analysis of such perceptrons  and consequently the injectivity properties of deep ReLU nets. As a result we obtain upper bounds on the injectivity capacity for any number of network layers and any number of nodes in any of the layers.

Two notions of injectivity capacity, \emph{weak} and \emph{strong}, are introduced and concrete numerical values for both of them are obtained. We observe a tendency that relative network expansion per layer (increase from the number of layer's inputs to the number of layer's outputs)  needed for injectivity decreases in any new added layer. Moreover, this decreasing is more pronounced for, practically more relevant, weak injectivity. In particular, we obtain that 4-layer deep nets are already close to reaching maximally needed expansion which is $\sim 10$.

As the results obtained through plain RDT are upper-bounds, we implemented a partially lifted (pl) RDT variant to lower them. To do so we first created a generic analytical adaptation of the plain RDT program and then conducted related numerical evaluations. For 2-layer nets we found that partially lifted RDT indeed is capable of decreasing the plain RDT upper bounds.

 Since the developed methodologies are fairly generic,  many extensions and generalizations are also possible. Beyond naturally expected fl RDT implementations, these, for example, include studying stability properties, injectivity sensitivity with respect to various imperfections (noisy gates, missing/incomplete portions of the training data, not fully adequately or optimally trained networks, etc.), and many others. Given the importance of injectivity within the \emph{deep compressed sensing} paradigm, quantifying networks generalizability properties as well as the complexity of the training and recovery algorithms related to such contexts are of great interest as well. The mechanisms presented here can be used for any of these tasks. As the associated technicalities are problem specific, we discuss them in separate papers.

\begin{singlespace}
\bibliographystyle{plain}
\bibliography{nflgscompyxRefs}
\end{singlespace}


\end{document}